\documentclass{article}
\usepackage{tabularx}
\usepackage{multirow}
\usepackage{amsmath}
\usepackage{amssymb}
\usepackage{mathtools}
\usepackage{amsthm}

\usepackage[ruled,vlined]{algorithm2e}

\usepackage{hyperref}

\usepackage[accepted]{icml2026}

\usepackage[textsize=tiny]{todonotes}

\usepackage[capitalize,noabbrev]{cleveref}

\usepackage{microtype}
\usepackage{graphicx}
\usepackage{subcaption}
\usepackage{booktabs} 

\theoremstyle{plain}
\newtheorem{theorem}{Theorem}[section]
\newtheorem{proposition}[theorem]{Proposition}

\newtheorem{corollary}[theorem]{Corollary}
\theoremstyle{definition}

\newtheorem{assumption}[theorem]{Assumption}
\theoremstyle{remark}

\icmltitlerunning{Learning Generalizable Skill Policy with Data-Efficient Unsupervised RL}

\begin{document}

\twocolumn[
  \icmltitle{Learning Generalizable Skill Policy with Data-Efficient Unsupervised RL}
  \icmlsetsymbol{equal}{*}
  \begin{icmlauthorlist}
    \icmlauthor{Jongchan Park}{ai}
    \icmlauthor{Seungjun Oh}{ai}
    \icmlauthor{Seungho Baek}{ai}
    \icmlauthor{Yusung Kim}{cse,ai}
  \end{icmlauthorlist}
  \icmlaffiliation{ai}{Department of Artificial Intelligence, Sungkyunkwan University, Suwon, Republic of Korea}
  \icmlaffiliation{cse}{Department of Computer Science and Engineering, Sungkyunkwan University, Suwon, Republic of Korea} 
  \icmlcorrespondingauthor{Yusung Kim}{yskim525@skku.edu}
  \icmlkeywords{Machine Learning, ICML}
  \vskip 0.3in
]

\printAffiliationsAndNotice{}  

\begin{abstract}
  Unsupervised Reinforcement Learning (URL) aims to pre-train scalable, skill-conditioned policies without extrinsic rewards, serving as a foundation for downstream control tasks. Despite recent progress, we argue that current off-policy URL methods are limited by two critical, overlooked bottlenecks: (1) non-stationary skill semantics and (2) brittle generalization. To address these challenges, we propose GenDa (Generalizable Data-efficient Agent), a unified framework for robust unsupervised reinforcement learning. First, we introduce a skill relabeling mechanism to mitigate non-stationarity and significantly improve data efficiency for pre-training. Second, we propose a Complementary Information Bottleneck (CIB), encouraging the learned skill policy to focus on ego-centric features and become robust to distribution shifts for downstream tasks. Through various experiments, we demonstrate that GenDa significantly enhances the scalability of URL with superior generalizability and data efficiency. Our code and videos are available at \url{https://ihatebroccoli.github.io/official-GenDa/}.
\end{abstract}

\section{Introduction}
Recent advances in unsupervised reinforcement learning (URL) have enabled the discovery of semantically distinct behaviors (“skills”) from state transitions, without access to external reward signals~\cite{gregor2016vic, kim2021bottleneck, kamienny2022direct, park2023csd, yang2023bcl}. This paradigm aims to create a general-purpose foundation for control rather than just learning individual skills.
It provides a pre-trained policy that readily adapts to diverse downstream tasks with minimal fine-tuning~\cite{laskin2021urlb,rajeswar2023masteringurlb}. 
Achieving scalability in URL requires both efficiency during pre-training and robustness during transfer, ensuring the learned policy generalizes to varied downstream tasks.

Despite recent progress in URL, we argue that current state-of-the-art methods face two critical bottlenecks that hinder this scalability: (1) sample inefficiency arising from the overlooked non-stationary skill semantics, and (2) brittle generalization caused by overfitting to global context.

First, real-world interactions are costly, making data efficiency important. To maximize data efficiency, modern URL methods rely on off-policy algorithms that reuse past experiences stored in a replay buffer. However, this introduces a fatal flaw: \textbf{semantic drift}. In off-policy settings, each trajectory is collected under a randomly sampled skill $z$ and $z$-conditioned skill policy~\cite{eysenbach2019diayn, sharma2020dads}. The resulting $z$–trajectory pair is stored in a replay buffer and reused throughout training~\cite{haarnoja2018sac, laskin2022cic, park2024metra, zheng2025csf}. As learning progresses, the behavioral trajectories induced by the same skill $z$ can change, because skill policy evolves across off-policy learning. The time-varying semantics of $z$  induce destabilization of the skill policy by providing stale samples.

Second, for a skill policy to be a scalable foundation for various downstream tasks, it must be robust to distribution shifts. A skill such as ``walking forward" should be executable regardless of the global contextual information, such as environmental factors. However, in many prior works~\cite{eysenbach2019diayn, park2024metra, zheng2025csf}, the observations used for training skill policies include global contextual information that is not essential for executing skills. When skill policies are conditioned on these global signals, the same skill may exhibit drastically different behaviors under slight shifts in coordinate distributions or become overfitted to specific global contexts. This brittleness limits the reusability of skills in downstream tasks and leads to fundamental generalization failures.

To address these challenges: 
\begin{itemize}
    \item We formalize two overlooked failure modes and provide theoretical and empirical evidence of their impact.
    \item We introduce skill relabeling to mitigate the semantic drift in the previous off-policy URL. We also propose a novel regularizer to ensure skill diversity under the mutual information objective,  effectively differentiating our skill relabeling framework from prior literature in other fields~\cite{andrychowicz2017her}.
    \item We propose a Complementary Information Bottleneck (CIB), a drop-in module compatible with an unsupervised framework. CIB learns an embedding that prevents the policy from exploiting global contextual information, improving skill execution consistency under distribution shifts.
\end{itemize}
We evaluate our approach on a diverse set of state and pixel benchmarks, demonstrating superior data efficiency and skill-policy generalization compared to prior methods. Notably, we also consider high-dimensional state environments, where existing approaches fail to discover meaningful skills, while our method successfully learns coherent and reusable skill policies.

\section{Preliminaries}
\paragraph{Unsupervised skill discovery}
We follow the settings of prior work~\cite{park2024metra}, a controlled Markov process without a reward function. MDP is defined as $\mathcal{M} = (\mathcal{S}, \mathcal{A}, \mu, p)$. $\mathcal{S}$ represents the state space, $\mathcal{A}$ represents the action space, ${\mu \text{: } \Delta(\mathcal{S})}$ represents the initial state distribution, and $p \text{: } \mathcal{S} \times\mathcal{A} \rightarrow \Delta (\mathcal{S})$ represents the transition dynamics kernel.
We also consider a discrete or continuous set of latent vectors $z \in \mathcal{Z}$ with a latent-conditioned skill policy $\pi(a\mid s, z)$.

In the terminology of unsupervised skill discovery, the latent skill vector $z$ is sampled from the prior distribution $p(z)$. Then skill policy $\pi(a\mid s, z)$ rolls out a trajectory $\tau=(s_0,a_0,s_1,a_1, ..., s_T)$ with a fixed $z$ for the entire episode. In this setting, the main goal is to learn diverse and useful behaviors $\pi(a\mid s,z)$ from scratch without guidance (e.g., data, prior knowledge, and supervision).

We follow the representation objective of the metric-aware unsupervised skill discovery~\cite{park2024metra} with latent mapping function $\phi(s)$ to inject semantic meaning in the skill vector $z \in \mathcal{Z}$:
\begin{equation}
\begin{split}
\sup_{\pi, \phi} 
\mathbb{E}_p(\tau,z) \bigg[\sum^{T-1}_{t=0} (\phi(s') - \phi(s))^\top z\bigg]
\label{eq:metra_obj}
\\
\text{s.t.} \|\phi(s) - \phi(s')\|_2 \le 1, \forall(s, s') \in \mathcal{S}_{adj}
\end{split}
\end{equation}

\paragraph{Downstream application with pre-trained skill policy}
A pre-trained skill policy $\pi$ can be adapted to downstream tasks. A hierarchical controller $\pi^h(z|s)$ samples from the set of learned skills and delivers $z$ to the (frozen) skill policy to maximize the downstream reward.

\label{sec:prelims}
\begin{figure}[thbp]
  
  \centering
  \includegraphics[width=0.8\linewidth]{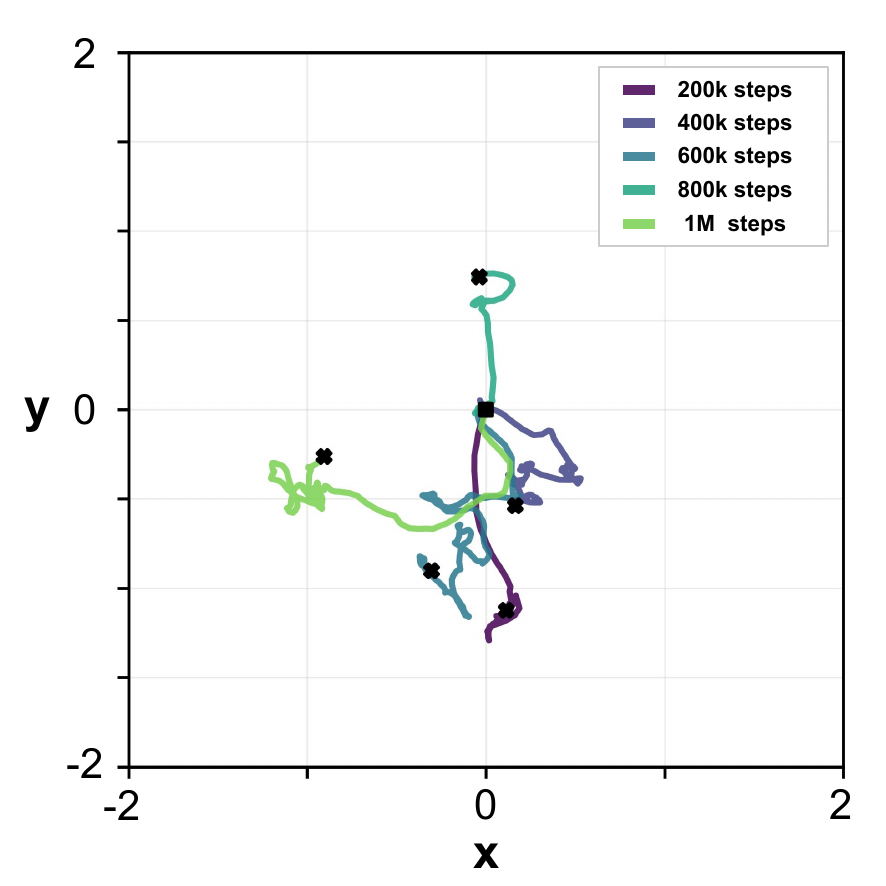}
  
  \caption{\textbf{Semantic drift in prior algorithm~\cite{park2024metra}.} This figure shows trajectories collected by the skill policy using the same skill vector z, illustrating that different behavioral trajectories can be observed for the same z during training. In standard off-policy URL, these trajectories are all stored in the replay buffer and repeatedly reused for representation learning. As a result, a one-to-many mapping between z and trajectories emerges, which destabilizes representation learning and leads to semantic drift.}
  \label{fig:nonstationaryz}
\end{figure}


\section{Problem Statements}
We formally identify the two core challenges preventing off-policy URL from achieving scalability.
\subsection{Sample Inefficiency from Semantically Ungrounded Skill Rollouts}
\label{sec:sampleineff}
In the off-policy URL setting, the agent collects a trajectory $\tau$ conditioned on a sampled skill $z \sim p(z)$. The transition tuple, including the skill label $(\tau, z_{roll})$, is stored in a replay buffer $\mathcal{B}$. The critical issue lies in the dual role of $z$: it serves as both the input skill for the policy $\pi(a|s, z)$ and the target label for the representation $\phi(s)$.

Unlike standard RL, where the reward function is static, in URL, the semantic meaning of $z$ is defined by the representation $\phi$, which is continuously evolving. Let $\phi_t$ denote the representation at training step $t$. A trajectory $\tau$ generated at time $t_{old}$ with skill $z$ satisfied the relationship $z \approx \phi_{t_{old}}(\tau)$. However, at the current step $t_{curr}$, the updated representation $\phi_{t_{curr}}$ may map this trajectory to a completely different point in the latent space.

Most existing methods ignore this drift, updating the current policy using the stale pair $(\tau, z)$ from the buffer. This creates a semantic misalignment: the policy is penalized for failing to generate $z$ according to the current $\phi_{t_{curr}}$, even though the trajectory was correct under $\phi_{t_{old}}$. Consequently, no meaningful correction occurs until the agent gathers enough ``fresh" samples to effectively displace or outweigh the stale data. This acts as a source of high-variance noise in the gradient estimation. Consequently, increasing the sample reuse rate (i.e., high UTD ratio)—which typically improves efficiency in off-policy RL—ironically worsens instability in URL by overfitting to this persistent label noise.

 Figure~\ref{fig:nonstationaryz} illustrates that the same skill z can be associated with different trajectories depending on the training stage of $\phi$ and $\pi$, highlighting a potentially critical failure mode in the off-policy URL framework.

\begin{figure}[thbp]
  \centering
  \includegraphics[width=0.72\linewidth]{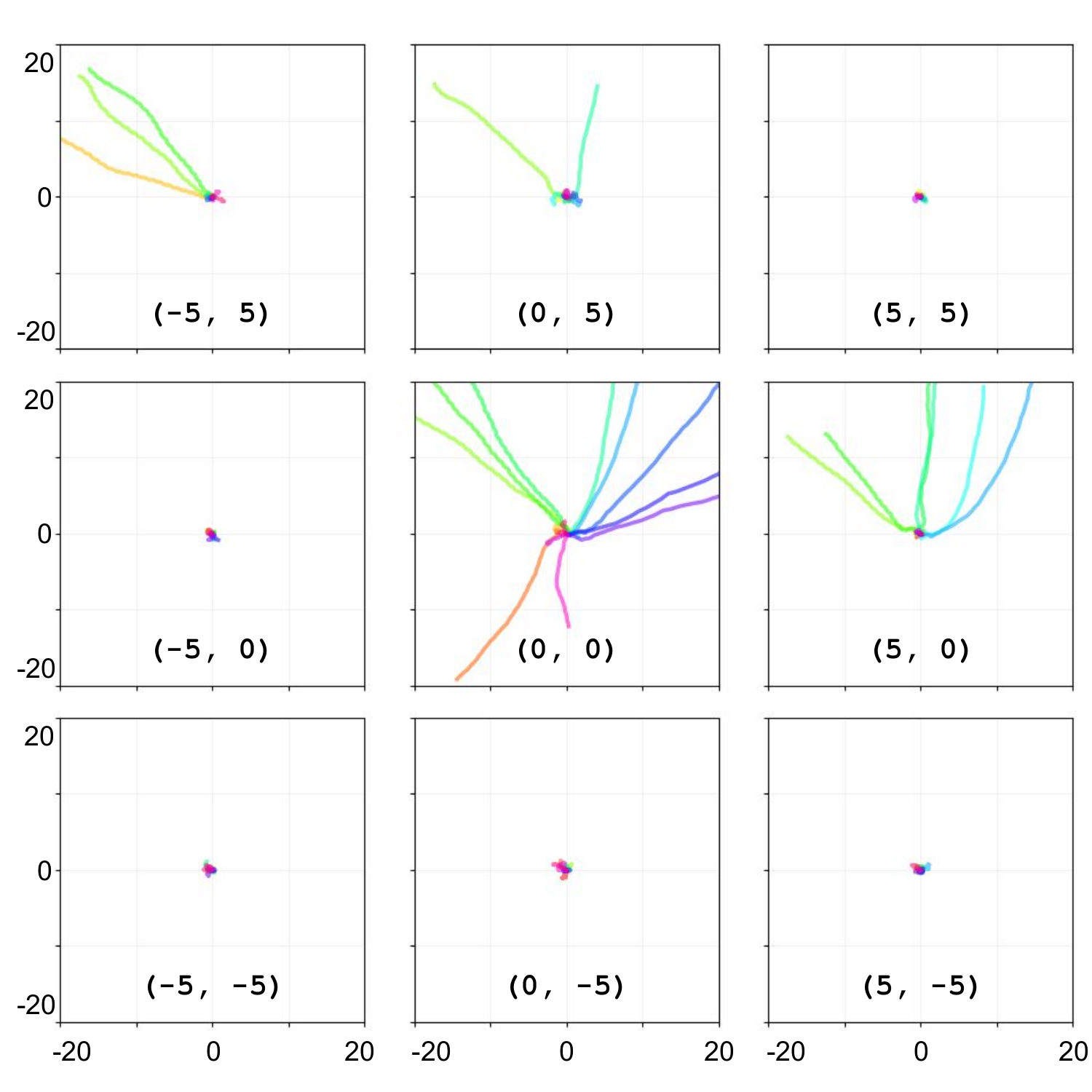}
  \caption{\textbf{Overfitting to xy-coordinates (global context).} An offset (a,b) indicates that the agent’s initial xy position is set to (a,b) at evaluation time. During training, the agent always starts from (0,0), so this setup evaluates whether a given skill z produces consistent behavior under shifted initial conditions. Curves of the same color represent trajectories generated by the same skill z. When different offsets are applied, the skill policy~\cite{park2024metra} fails to reproduce consistent trajectories, providing clear evidence of brittle generalization.}
  \label{fig:metraoffset}
\end{figure}

\subsection{Lack of Skill-Policy Generalization under Downstream Tasks}
\label{sec:skill-policy-generalization}
A second limitation concerns the limited generalization capability of the learned skill policy. In URL, skill policies are typically intended to function as consistent local behavioral primitives that respond to local state changes, directed by the given skill's guidance. This property is a key prerequisite for reusing skill policy across diverse initial conditions and downstream environments.

In most prior works, however, skill policies are often conditioned directly on the full state observation. Such observations include global contextual information (e.g., absolute positions or visual background cues) that is not essential for executing local behaviors. When the policy is conditioned on this information, skill execution becomes entangled with the global context, weakening the intended locality of the learned behaviors.

This entanglement makes the policy fragile under distribution shifts. Changes in the global context between pre-training and deployment cause the same latent $z$ to produce unreliable behaviors, significantly hindering downstream performance.

In Figure~\ref{fig:metraoffset}, we show a failure mode of the prior algorithm where conditioning on full state observation fails to execute reliably on varying global contexts. This failure highlights that maximizing state coverage during pre-training is a necessary but insufficient condition for scalability; the learned skills must also be disentangled from the specific global context in which they were discovered.

\section{Method}
We propose GenDa (Generalizable Data-efficient Agent), a unified framework designed to train a scalable skill foundation for control. Unlike prior works that treat skill discovery as a static optimization problem, GenDa addresses the dynamic nature of off-policy learning and structural requirements for generalization.
We introduce (a) skill relabeling to dynamically align past experiences with the evolving latent space, and (b) Complementary Information Bottleneck (CIB) to structurally encourage the policy to learn ego-centric, reusable behaviors.
Our approach is based on the previous state-of-the-art algorithm~\cite{park2024metra}.

\subsection{Relabeling for Skill Representation Learning}
\label{sec:relabel_rep}

\subsubsection{Relabeling with the current latent interpretation.}
Standard off-policy URL methods suffer from \textbf{semantic drift}, where the skill labels $z$ stored in the replay buffer become stale as the representation $\phi$ evolves. Updating the policy with these stale labels injects high-variance noise into the learning process, thereby limiting the data efficiency in the off-policy setting. To overcome this, we introduce skill relabeling, which treats past trajectories not as fixed ($\tau,z$) pairs, but as flexible experiences that can be re-interpreted. 

\paragraph{Relabeling ($z$-step)}
To correct the semantic inconsistency caused by fixed rollout-time $z_{roll}$ in the replay buffer, we need to define a new label $z$ that is consistent with the current $\phi$. To define that $z$, we revisit the telescoping sum in Equation~\ref{eq:metra_obj}. For an episode $\tau=(s_0,a_0,s_1,...,s_T)$, the telescoping sum can be written as $(\phi(s_T) - \phi(s_0))^\top z$, which means $\sum\limits_{t=0}^{T-1}{(\phi(s_{t+1})-\phi(s_t))}$. We can define this telescoping sum as delivering the most appropriate direction vector to explain $\tau$ with on-policy $\phi$.

 Instead of using $z_{roll}$, we compute $z_{\text{relab}}$ from the current latent function:
\begin{equation}
\begin{split}
z_{\text{relab}}(\tau) 
= \mathrm{unit}\!\left(\phi(s_T) - \phi(s_0)\right),
\\
\mathrm{unit}(x) := \frac{x}{\text{max}(\|x\|_2, \epsilon)}.
\label{eq:z_relab}
\end{split}
\end{equation}
where $\epsilon$ is a very small positive scalar.  

\paragraph{Optimization ($\phi$-step)}We then learn with the objective in equation~\ref{eq:metra_obj} by replacing $z$ with $z_{\text{relab}}$ while stopping gradients:
\begin{equation}
\begin{split}
\sup_{\pi, \phi} 
\mathbb{E}_p(\tau,z_{\text{relab}}) \bigg[\sum^{T-1}_{t=0} (\phi(s') - \phi(s))^\top z_{\text{relab}}\bigg]
\label{eq:obj_with_relab}
\\
\text{s.t.} \|\phi(s) - \phi(s')\|_2 \le 1, \forall(s, s') \in \mathcal{S}_{adj}
\end{split}
\end{equation}

Iterating (\emph{$z$-step} $\rightarrow$ \emph{$\phi$-step}) mitigates the gradient conflicts caused by the fixed $z_\text{roll}$, thereby resolving the aforementioned semantic drift. We demonstrate that our iterative update is mathematically robust and structurally stable at each step (Details in Appendix~\ref{sec:well_posedness}).
\begin{align}
z_{relab} = \mathrm{unit}\!\left(\phi_{\text{tgt}}(s_T) - \phi_{\text{tgt}}(s_0)\right).
\label{eq:z_relab_ema}
\end{align}

\paragraph{Implementation details of skill relabeling}
In practice, to reduce training instability, we compute the relabeled targets using an exponential moving average (EMA) network ~\cite{Schwarzer2020DataEfficientRLself}, as shown in Equation~\ref{eq:z_relab_ema}.

\subsubsection{Preventing collapse via a uniformity regularizer.}

In prior works utilizing a fixed $z_{roll}$ label, the distribution of $z_{roll}$ is naturally guaranteed to follow a random distribution $p(z)$, which is fixed after sampling. Consequently, the entropy term $H(z)$ becomes a trivial constant in the objective for $\phi$.

While relabeling stabilizes learning, naively maximizing alignment with re-assigned labels can lead to \textbf{representation collapse}, where $\phi$ maps all states to a narrow region of the latent space to trivially satisfy the objective (Equation~\ref{eq:obj_with_relab}). Since the distribution of the relabeled skill $z_{relab}$ is not guaranteed to follow $p(z)$, to prevent such collapse, we introduce a contrastive-style \emph{uniformity} regularizer to explicitly maximize $H(z_{relab})$.

Let $v_i = \mathrm{unit}\!\left(\phi(s_T^{(i)}) - \phi(s_0^{(i)})\right)$ be the episodic direction for episode $i$ in a mini-batch.
We define
\begin{align}
\mathcal{L}_{\text{unif}} 
= \frac{1}{B}\sum_{i=1}^{B} \log \sum_{\substack{j=1 \\ j\neq i}}^{B} \exp\!\left(v_i^\top v_j\right),
\label{eq:uniformity}
\end{align}
where $B$ is the number of samples in a mini-batch. We \emph{minimize} $\mathcal{L}_{\text{unif}}$, which improves the lower bound of $H(z_{relab})$~\cite{oord2018cpc}. By encouraging the skill vectors to be uniformly distributed on the hypersphere, we ensure that the agent discovers a diverse repertoire of behaviors even while continuously redefining what those behaviors are. 

\subsubsection{Final representation objective.}
Let $\mathcal{L}_{\text{base}}$ denote the representation loss for the objective in Equation~\ref{eq:obj_with_relab}.
Our final representation learning loss is
\begin{align}
\mathcal{L}_{\phi} = \mathcal{L}_{\text{base}} + \beta \, \mathcal{L}_{\text{unif}}
\label{eq:rep_final}
\end{align}
where $\beta$ is a scalar coefficient (we use $\beta=1.0$ for all environments). Figure~\ref{fig:phispread} (in Appendix) depicts that our uniformity term can handle the induced bias well. In Appendix~\ref{sec:label_noise}, we provide a theoretical analysis demonstrating that relabeling stabilizes the representation learning process in an off-policy setting.

\subsection{Relabeling for Skill Policy Learning}
\label{sec:relabel_policy}
In the policy learning phase, the intrinsic reward function
\begin{align}
r(\phi, z) = (\phi(s') - \phi(s))^\top z
\label{align:intrinsic_reward}
\end{align}
evaluates how well a transition $s \rightarrow s'$ follows a conditioned $z$ in the latent space based on the magnitude and direction of $\Delta\phi$.
Prior methods use only fixed $z_{\text{roll}}$ as a condition of the intrinsic reward function, which limits the privilege of the off-policy setting.

We instead relabel each transition with multiple targets derived from diverse state pairs within the same trajectory to enrich the off-policy reward signal by exploiting the intrinsic reward in Equation~\ref{align:intrinsic_reward}.

For a transition at time $t$, we sample a horizon $c>0$ and define:
\begin{align}
z_{c} = \mathrm{unit}\!\left(\phi(s_{t+c}) - \phi(s_t)\right).
\label{eq:policy_relabel}
\end{align}
We mix $z_c$ with the $z_{\text{relab}}$ and the original rollout label $z_{\text{roll}}$ (details in Appendix~\ref{fig:policyrealb}) as commonly done in relabeling to preserve on-policy semantics in practice~\cite{andrychowicz2017her}.
This enables the policy/value function to learn from a richer set of $(s, s', z)$ configurations than those directly experienced, improving bootstrapping.
\begin{algorithm*}[t]
  \caption{GenDa Unsupervised Skill Discovery}
  \label{alg:learning}
  \begin{algorithmic}[1] 
    \STATE Initialize skill policy $\pi(a \mid \ell, z)$, representation function $\phi$, EMA of representation function $\phi_{tgt}$, CIB $q_{\psi}(\ell \mid s)$, Lagrange multiplier $\lambda$, replay buffer $\mathcal{B}$
    \FOR{$i \leftarrow 1$ to (\# epochs)}  
        \FOR{$j \leftarrow 1$ to (\# episodes per epoch)} 
            \STATE Sample skill $z \sim p(z)$
            \STATE Collect trajectory $\tau$ using $\pi(a \mid \ell, z)$ with $\ell \sim q_{\psi}(\cdot \mid s)$, and store to $\mathcal{B}$
        \ENDFOR
        
        \FOR{$k \leftarrow 1$ to (\# gradient steps per epoch)} 
            \STATE Sample a batch $x = (s_t, a_t, s_{t+1}, s_{t+c}, s_0, s_T, z_{roll}) \text{ from } \mathcal{B}$ 
            
            \STATE $x_{relab} \leftarrow$ Relabel $z_{roll}$ in $x$ to $z_{relab}$ using Eq.~\ref{eq:z_relab_ema}
            \STATE $x_{c} \leftarrow$ Relabel $z_{roll}$ in $x$ to $z_{c}$ using Eq.~\ref{eq:policy_relabel}
            
            \STATE Update representation $\phi(s)$ and Lagrange multiplier $\lambda$ using $x_{relab}$ with Eq.~\ref{eq:rep_final}
            
            \STATE Create mixed batch: $x_{mix} \leftarrow x \cup x_{relab} \cup x_{c}$
            \STATE Update skill policy $\pi(a \mid \ell, z)$ using SAC~\cite{haarnoja2018sac} with $x_{mix}$
            
            \STATE Update CIB to maximize Eq.~\ref{eq:cve_obj}
            
            \STATE Update EMA target: $\phi_{tgt} \leftarrow \alpha \phi_{tgt} + (1-\alpha) \phi$
        \ENDFOR    
    \ENDFOR    
  \end{algorithmic}
\end{algorithm*}
\subsection{Complementary Information Bottleneck (CIB)}
\label{sec:cve}

A pre-trained foundational skill policy $\pi(a \mid s, z)$ is desired to execute pre-trained skills for diverse applications, where distributional shifts in global contextual information frequently occur. However, previous algorithms use the full state as an observation that leads to unintended overfitting on the global context. To disentangle this connection, we propose the Complementary Information Bottleneck (CIB).

\subsubsection{Key idea.}
\label{sub:cve_idea}
Metric-aware skill discovery methods encourage the latent space $\phi$ to capture temporal distances. Often, the global contextual information is desired for the $\phi$ latent space, because this objective promotes the agent to capture the extensive ``temporal" manifolds in the state space~\cite{park2024metra}. We desire to make the policy rely on the skill without entangling global contextual information in the raw state. To address this, we propose substituting the raw policy observations with a learned embedding that is \emph{complementary} to $\phi(s)$.

We train an encoder $q_{\psi}(\ell \mid s )$ that produces an embedding $\ell$, together with a decoder $p_{\omega}(\hat{s} \mid \ell,\phi(s))$, by maximizing a variational information bottleneck~\cite{Alemi2017DeepVIb} style objective:
\begin{align}
\mathcal{J}_{\text{CIB}}
= I(\ell ; s) - I(\ell;\phi(s))
\label{eq:cve_obj}
\end{align}

Intuitively, the decoder can use $\phi(s)$ to reconstruct whatever information $\phi$ already captures; thus, $q_{\psi}(\ell \mid s)$ is encouraged to preserve the remaining information needed to reconstruct $s$, making $\ell$ complementary to $\phi(s)$.

\subsubsection{Skill Policy with CIB.}
We replace the raw state input of the skill policy with the CIB embedding $\ell$:
\begin{align}
\pi(a \mid s, z) \;\;\longrightarrow\;\; \pi(a \mid\ell, z), 
\qquad \ell \sim q_{\psi}(\ell \mid s).
\label{eq:policy_cve}
\end{align}
The CIB is used consistently during data collection, training, and downstream deployment. 
The CIB encoder only changes the observation channel the policy conditions on, improving robustness to shifts in global contextual information.

\section{Experiments}
\begin{figure*}[t]
  \centering

  \begin{subfigure}[b]{0.23\linewidth}
    \centering
    \includegraphics[width=\linewidth]{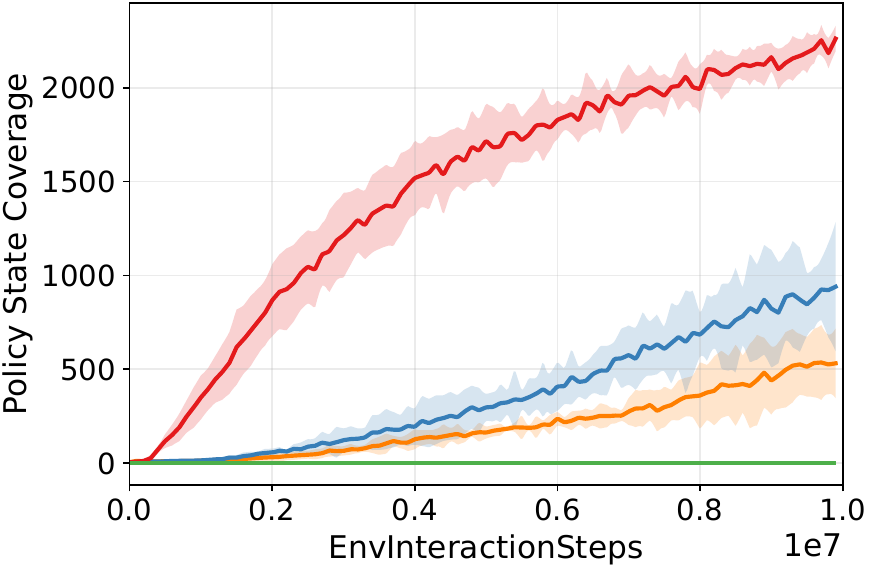}
    \caption{Humanoid-Numeric}
    \label{fig:ns_a}
  \end{subfigure}\hfill
  \begin{subfigure}[b]{0.23\linewidth}
    \centering
    \includegraphics[width=\linewidth]{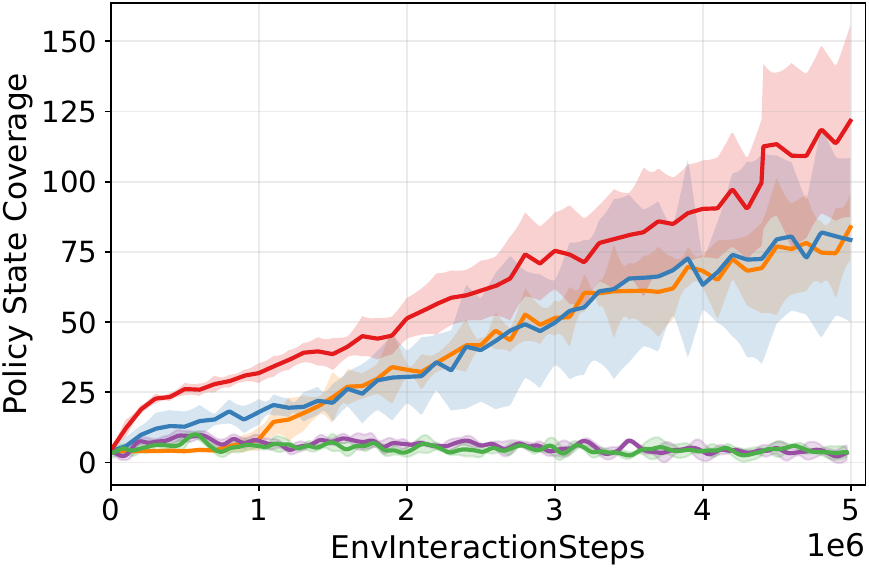}
    \caption{Humanoid-Pixels}
    \label{fig:ns_b}
  \end{subfigure}\hfill
  \begin{subfigure}[b]{0.23\linewidth}
    \centering
    \includegraphics[width=\linewidth]{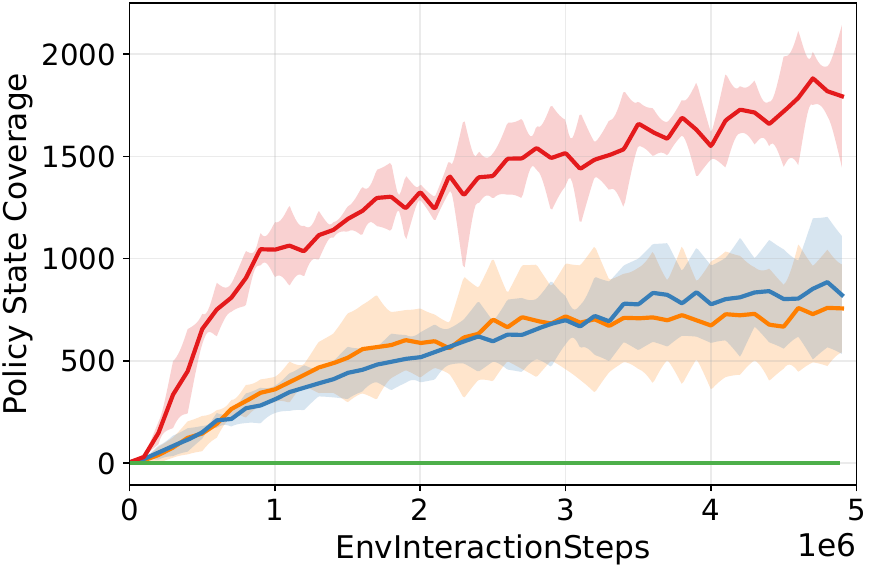}
    \caption{Quadruped-Numeric}
    \label{fig:ns_c}
  \end{subfigure}\hfill
  \begin{subfigure}[b]{0.23\linewidth}
    \centering
    \includegraphics[width=\linewidth]{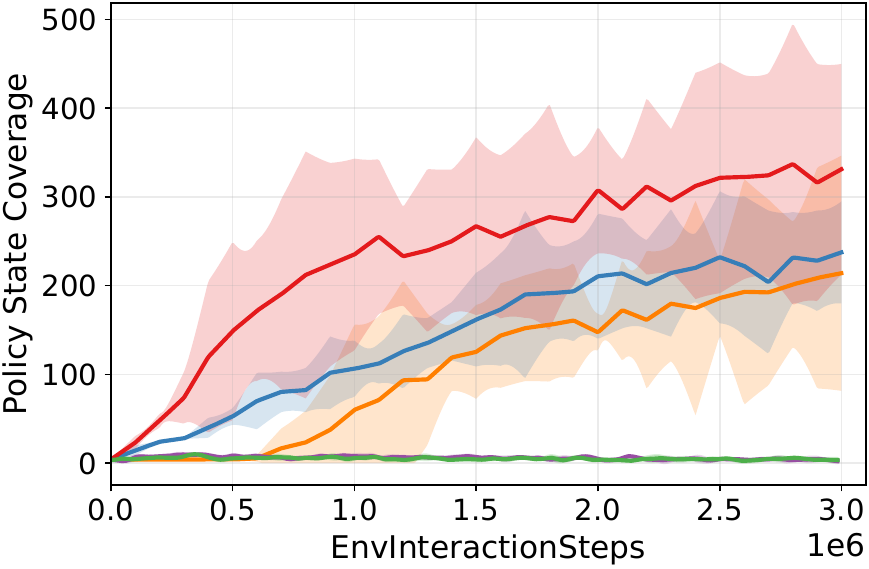}
    \caption{Quadruped-Pixels}
    \label{fig:ns_d}
  \end{subfigure}

  \par\medskip  

  \begin{subfigure}[b]{0.23\linewidth}
    \centering
    \includegraphics[width=\linewidth]{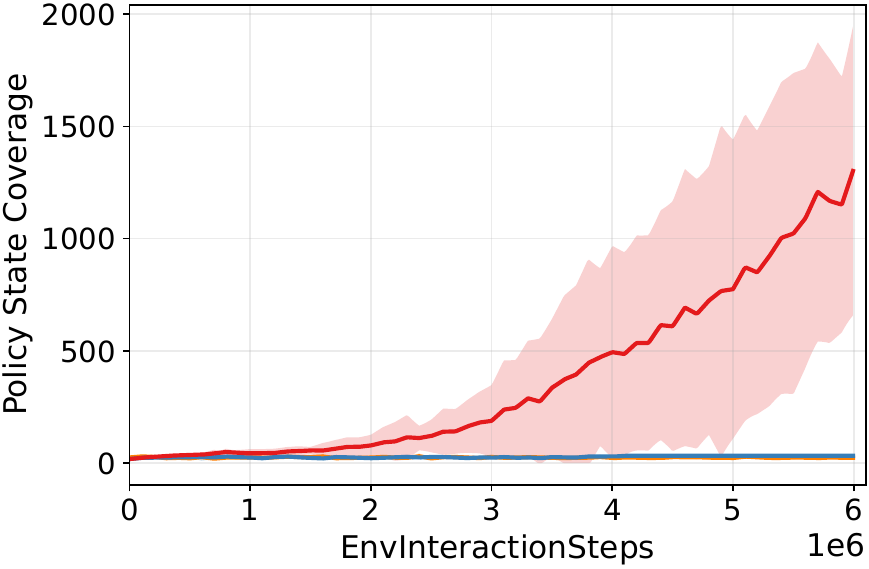}
    \caption{Dog-Numeric}
    \label{fig:ns_e}
  \end{subfigure}\hfill
  \begin{subfigure}[b]{0.23\linewidth}
    \centering
    \includegraphics[width=\linewidth]{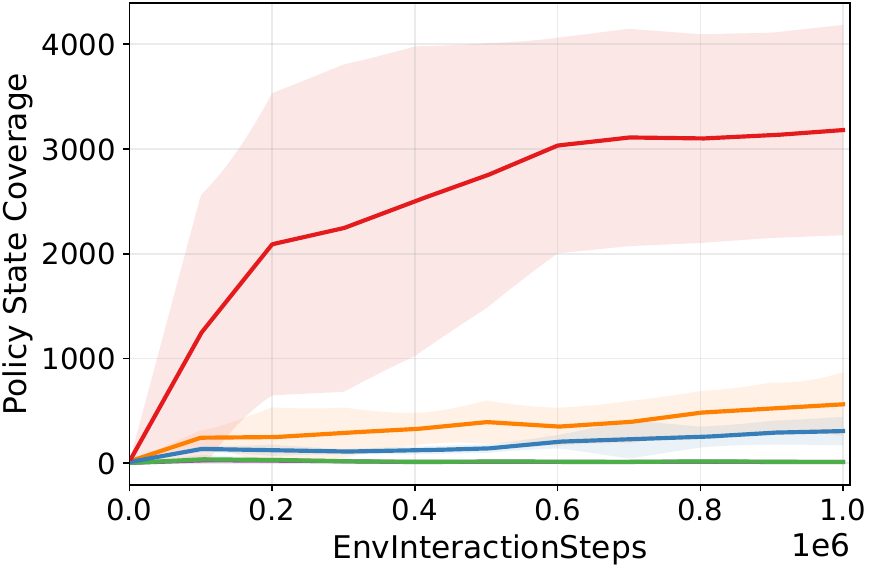}
    \caption{Fish-Numeric}
    \label{fig:ns_f}
  \end{subfigure}\hfill
  \begin{subfigure}[b]{0.23\linewidth}
    \centering
    \includegraphics[width=\linewidth]{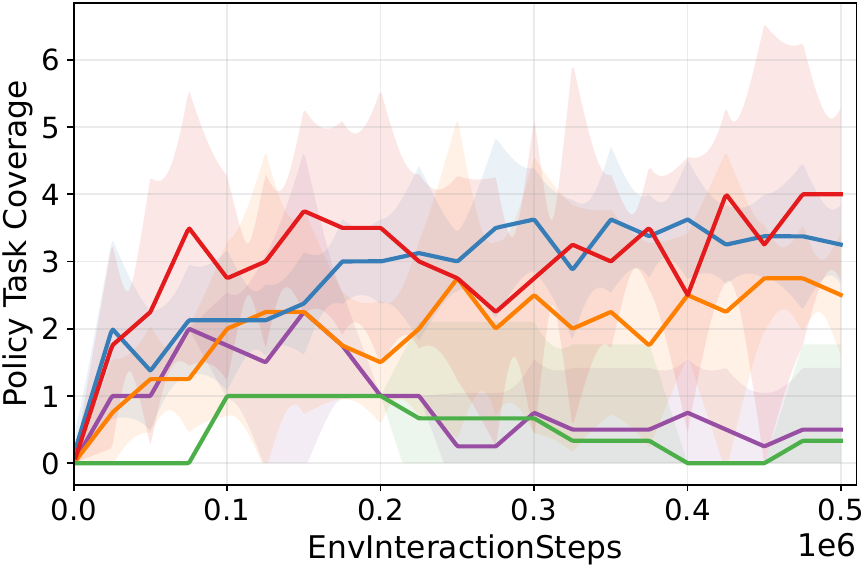}
    \caption{Kitchen-Pixels}
    \label{fig:ns_g}
  \end{subfigure}\hfill
  \begin{subfigure}[b]{0.23\linewidth}
    \centering
    \includegraphics[width=1.1\linewidth]{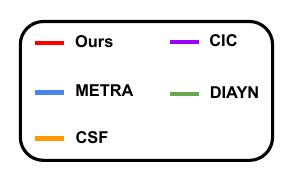}
    \label{fig:ns_h}
  \end{subfigure}

  \caption{\textbf{Quantitative comparison with unsupervised skill discovery methods (4 seeds).} We measure the state/task coverage of the policies. Our algorithm scores the best coverage across all environments. Notably, our algorithm learns meaningful skills in ``Dog-Numeric" and ``Fish-Numeric" where other methods fail.}
  \label{fig:main_res_cov}
\end{figure*}
\begin{figure}[t]
  
  \begin{subfigure}[b]{0.24\linewidth}
    \centering
    \includegraphics[width=\linewidth]{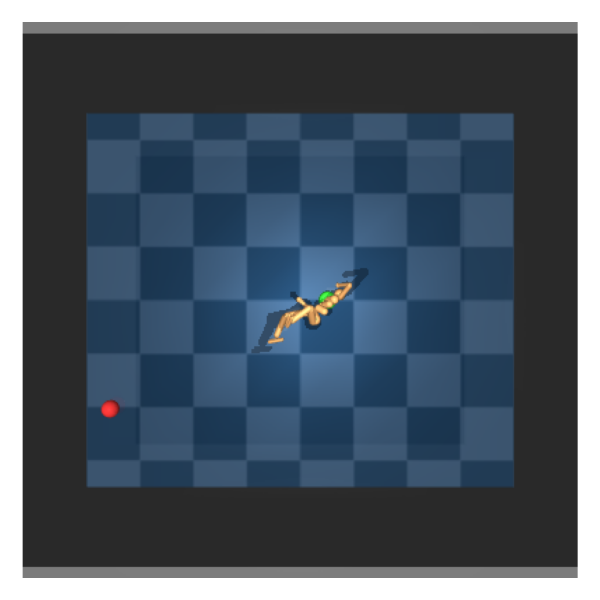}
    \caption{(FS, RG)}
  \end{subfigure}\hfill
  \begin{subfigure}[b]{0.24\linewidth}
    \centering
    \includegraphics[width=\linewidth]{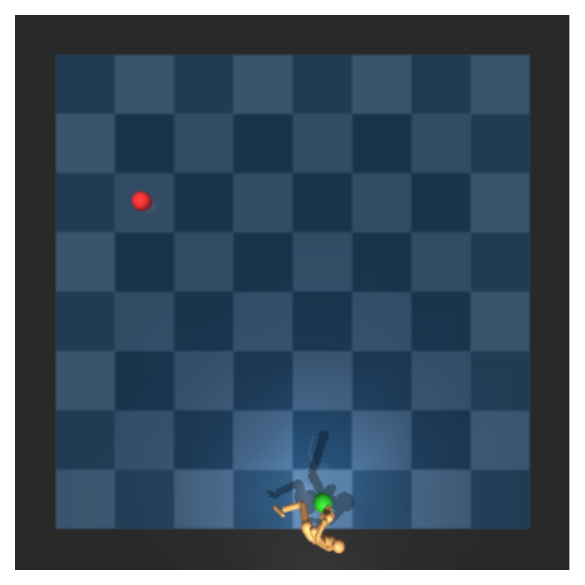}
    \caption{(RS, RG)}
  \end{subfigure}\hfill
  \begin{subfigure}[b]{0.24\linewidth}
    \centering
    \includegraphics[width=\linewidth]{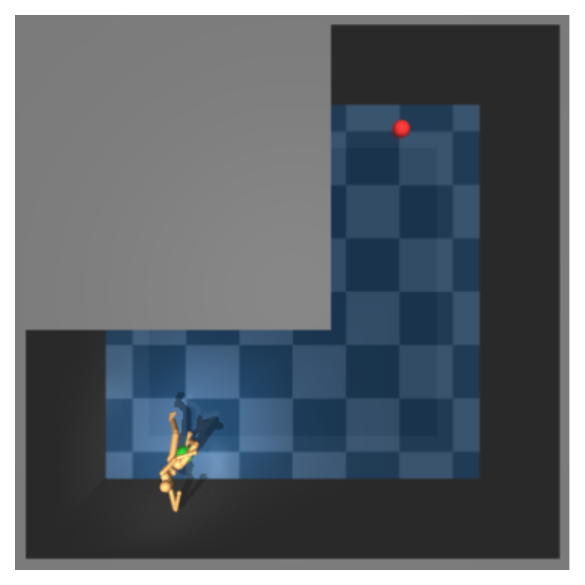}
    \caption{MazeEasy}
  \end{subfigure}\hfill
  \begin{subfigure}[b]{0.24\linewidth}
    \centering
    \includegraphics[width=\linewidth]{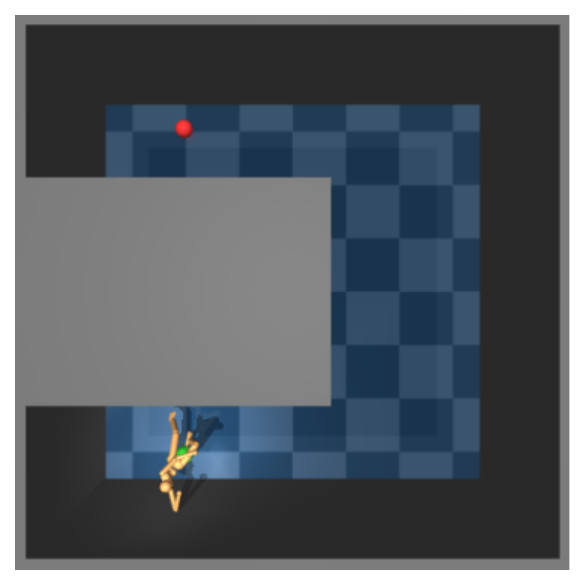}
    \caption{MazeHard}
  \end{subfigure}
  \caption{\textbf{Downstream benchmark environments for Humanoid.} Prefix F- means ``Fixed" and R- means ``Random". Suffix -S means ``Start (green ball in image)", -G means ``Goal (red ball in image)". For instance, (FS, RG) depicts the fixed initial start with a random goal in both the training and evaluation phases. Maze environments serve (RS, RG) for the training phase, and challenging (FS, FG) is given for the evaluation phase.}
\label{fig:task}
\end{figure}

We design our experiments to verify whether GenDa successfully overcomes the scalability bottlenecks identified in Sec. 3. Specifically, we investigate the following questions:
\begin{enumerate}
    \item \textbf{Data Efficiency} \emph{Does skill relabeling enhance the data efficiency of the pre-training phase by mitigating the non-stationarity of off-policy learning?}
    \item \textbf{Generalization} \emph{Does disentangling global context via CIB produce robust skills that generalize to unseen initial conditions and downstream tasks?}
    \item \textbf{Scalability} \emph{Can GenDa serve as a better foundation policy for challenging, high-dimensional control problems compared to state-of-the-art baselines?}
\end{enumerate}

As baselines, we include \textbf{METRA}~\cite{park2024metra} and \textbf{CSF}~\cite{zheng2025csf}, which are state-of-the-art \emph{metric-aware} URL methods, as well as \textbf{DIAYN}~\cite{eysenbach2019diayn} and \textbf{CIC}~\cite{laskin2022cic}, which represent alternative approaches to URL.
For evaluation, we use numeric and pixel-based locomotion environments from the DeepMind Control Suite (DMC)~\cite{tassa2018dmc}, along with manipulation environments in MuJoCo~\cite{todorov2012mujoco, Schulman2015HighDimensionalCCAnt, gupta2020relay}.
Detailed descriptions of the environments and experimental settings for each algorithm are provided in the appendix (see Appendix~\ref{ab:imples}).

\subsection{Main Results}

\paragraph{Skill pre-training.}
In Figure~\ref{fig:main_res_cov}, we evaluate the skill policy's state and task coverage. Coverage is measured by the number of unique coordinate bins (or unique tasks for Kitchen) visited across $k=48$ evaluation episodes using randomly sampled skills. We set the bin size to 1 for the x and y dimensions in general environments and 0.01 for the x, y, and z dimensions in Fish.

In a diverse set of environments, GenDa achieves higher state coverage with substantially fewer environment interactions than prior methods.
Moreover, in \emph{Dog}---an environment with high-dimensional state (226 dims) and action (38 dims) spaces where existing algorithms often fail to learn meaningful skills---our method succeeds in learning skills.
These results indicate that our relabeling approach can enhance the data efficiency in pre-training, which is also scalable to challenging tasks.

\begin{table*}[t]
  \caption{\textbf{Quantitative comparison in downstream tasks (4 seeds).} High-level controllers are trained for 20k and 40k interaction steps in state-based and pixel-based environments, respectively.}
  \begin{center}
    \begin{small}
      \newcolumntype{Y}{>{\centering\arraybackslash}X}
      \begin{tabularx}{\linewidth}{l l l Y Y Y}
        \toprule
        Type & Domain & Task & CSF & METRA & \textbf{Ours} \\
        \midrule

        \multirow{13}{*}{\textit{State-based}}
          & \multirow{4}{*}{Humanoid}
            & (FS, RG) & 0.35 $\pm$ 0.34 & 0.52 $\pm$ 0.12 & \textbf{0.83 $\pm$ 0.08} \\
          & & (RS, RG) & 0.08 $\pm$ 0.06 & 0.04 $\pm$ 0.03 & \textbf{0.68 $\pm$ 0.13} \\
          & & MazeE & 0.34 $\pm$ 0.40 & 0.19 $\pm$ 0.27 & \textbf{0.78 $\pm$ 0.04} \\
          & & MazeH & 0.01 $\pm$ 0.01 & 0.00 $\pm$ 0.00 & \textbf{0.41 $\pm$ 0.20} \\
        \cmidrule(lr){2-6}

          & \multirow{4}{*}{Quadruped}
            & (FS, RG) & 0.50 $\pm$ 0.07 & \textbf{0.81 $\pm$ 0.05} & 0.78 $\pm$ 0.08 \\
          & & (RS, RG) & 0.01 $\pm$ 0.02 & 0.04 $\pm$ 0.02 & \textbf{0.35 $\pm$ 0.03} \\
          & & MazeE & 0.14 $\pm$ 0.28 & 0.02 $\pm$ 0.03 & \textbf{0.33 $\pm$ 0.33} \\
          & & MazeH & 0.00 $\pm$ 0.00 & 0.00 $\pm$ 0.00 & \textbf{0.13 $\pm$ 0.16} \\
        \cmidrule(lr){2-6}

          & \multirow{4}{*}{Dog}
            & (FS, RG) & 0.01 $\pm$ 0.01 & 0.01 $\pm$ 0.01 & \textbf{0.55 $\pm$ 0.10} \\
          & & (RS, RG) & 0.00 $\pm$ 0.00 & 0.00 $\pm$ 0.00 & \textbf{0.53 $\pm$ 0.08} \\
          & & MazeE & 0.00 $\pm$ 0.00 & 0.00 $\pm$ 0.00 & \textbf{0.56 $\pm$ 0.07} \\
          & & MazeH & 0.00 $\pm$ 0.00 & 0.00 $\pm$ 0.00 & \textbf{0.35 $\pm$ 0.15} \\
        \cmidrule(lr){2-6}

          & Fish
            & (FS, RG) & 0.00 $\pm$ 0.00 & 0.00 $\pm$ 0.00 & \textbf{0.79 $\pm$ 0.06} \\
        \midrule

        \multirow{4}{*}{\textit{Pixel-based}}
          & \multirow{2}{*}{Humanoid}
            & (FS, RG) & 0.36 $\pm$ 0.09 & 0.35 $\pm$ 0.04 & \textbf{0.48 $\pm$ 0.15} \\
          & & (RS, RG) & 0.18 $\pm$ 0.06 & 0.16 $\pm$ 0.07 & \textbf{0.30 $\pm$ 0.10} \\
        \cmidrule(lr){2-6}

          & \multirow{2}{*}{Quadruped}
            & (FS, RG) & 0.40 $\pm$ 0.22 & 0.60 $\pm$ 0.09 & \textbf{0.64 $\pm$ 0.07} \\
          & & (RS, RG) & 0.18 $\pm$ 0.08 & 0.30 $\pm$ 0.11 & \textbf{0.40 $\pm$ 0.11} \\
        \bottomrule
      \end{tabularx}
    \end{small}
  \end{center}
  \vskip -0.1in
  \label{table:downstream}
\end{table*}

\paragraph{Downstream tasks.}
To investigate the generality and scalability of the pre-trained policies, we propose diverse downstream tasks. 

We further evaluate on downstream tasks using the pre-trained skill policy within a hierarchical framework.
Specifically, the skill policy is frozen, and a high-level task policy outputs an action $z$; the skill policy then executes the corresponding skill for a fixed number of timesteps.
Each episode has a goal. If the agent reaches the given goal, it receives a +1 success reward and 0 reward for the other states. The meaning of each suffix is described in Figure~\ref{fig:task}.

(FS, RG) is the easiest setting, as it provides the global information distribution similar to that of the pre-trained policy. 
The (RS, RG) setting tests whether the skill policy can execute the learned skill under a shift in global information. 
The Maze benchmark is the most challenging task in our suite; solving it requires strong generalization and exploration capabilities to perform a given skill over a wider range of coordinates. 
As shown in Table~\ref{table:downstream}, our GenDa achieves the highest average success rate across downstream tasks.

\begin{table}[h]
  \caption{\textbf{Downstream performance of GenDa in Humanoid-Numeric downstream tasks (4 seeds).} We compare the performance between two pre-trained skill policies of GenDa with different numbers of interaction steps.}
  \label{table:genda2m10m}
  \begin{center}
    \begin{small}
        \begin{tabular}{lccccr}
          \toprule
          Env  & Ours(2M) & Ours(10M)\\
          \midrule
           Humanoid-Numeric-(FS, RG) & 0.73 $\pm$ 0.14 & \textbf{0.83 $\pm$ 0.08} \\
           Humanoid-Numeric-(RS, RG) & \textbf{0.77 $\pm$ 0.16} & 0.68 $\pm$ 0.13 \\
           Humanoid-Numeric-MazeE & 0.60 $\pm$ 0.32 & \textbf{ 0.78 $\pm$ 0.04} \\
           Humanoid-Numeric-MazeH & 0.14 $\pm$ 0.17 & \textbf{ 0.41 $\pm$ 0.20} \\
          \bottomrule
        \end{tabular}
    \end{small}
  \end{center}
  \vskip -0.1in
\end{table}

Crucially, GenDa achieves comparable performance to the asymptotic limit of baselines with 5$\times$ fewer environment interactions in Table~\ref{table:genda2m10m}. This superior data efficiency positions GenDa as a practical foundation for real-world robotic learning where interaction is costly.

\begin{table*}[ht]
\caption{\textbf{Performance across different environments varying $\beta$ parameter (3 seeds).}}
\centering{
\begin{tabular}{lcccc}
\toprule
$\beta$ & \textbf{Humanoid-Numeric} & \textbf{Quadruped-Numeric} & \textbf{Quadruped-Pixel} & \textbf{Humanoid-Pixel} \\
\midrule
\textbf{0.1}        & $2094 \pm 89$  & $1740 \pm 223$ & $286 \pm 85$ & $127 \pm 72$ \\
\textbf{0.5}        & $2236 \pm 103$ & $1597 \pm 66$  & -            & -            \\
\textbf{1.0 (ours)} & $2192 \pm 112$ & $1794 \pm 220$ & $331 \pm 75$ & $124 \pm 37$ \\
\textbf{2.0}        & $2213 \pm 192$ & $1531 \pm 409$ & -            & -            \\
\textbf{4.0}        & $2595 \pm 386$ & $1783 \pm 63$  & $285 \pm 77$ & $123 \pm 30$ \\
\bottomrule
\end{tabular}
}
\label{tab:beta_ablation}
\end{table*}

\subsection{GenDa ablation study}
In this section, we evaluate the contribution of each component in GenDa and provide further analyses of our algorithm.

\paragraph{Component-wise Evaluation.}
\begin{figure}[h]
  \centering
  \begin{subfigure}[b]{0.45\linewidth}
    \centering
    \includegraphics[width=0.95\linewidth]{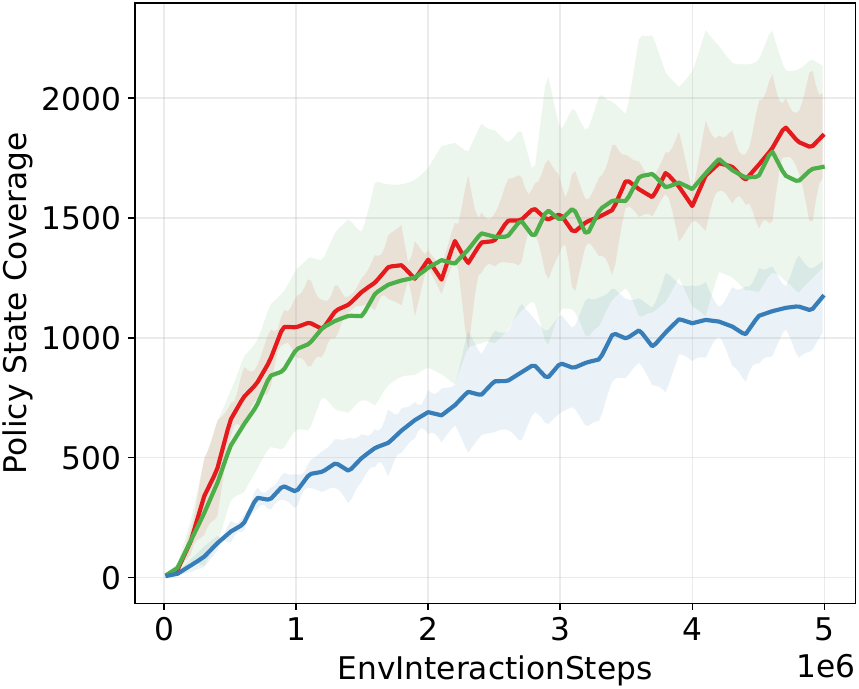}
    \caption{Skill discovery}
  \end{subfigure}
  \begin{subfigure}[b]{0.45\linewidth}
    \includegraphics[width=0.95\linewidth]{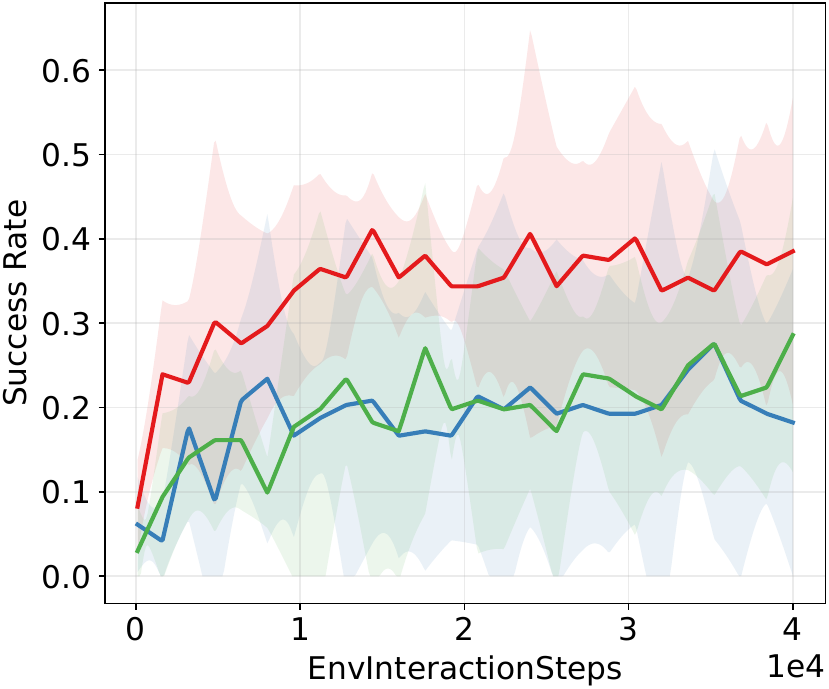}
    \caption{Downstream(RS, RG)}
  \end{subfigure}
  \begin{subfigure}[b]{0.7\linewidth}
    \centering
    \includegraphics[width=0.95\linewidth]{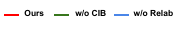}
  \end{subfigure}
  \caption{\textbf{Component analysis in Quadruped-Numeric (4 seeds).}}
  \label{fig:compo}
\end{figure}

We conduct ablation studies to quantify the contribution of each module in GenDa. Figure~\ref{fig:compo} demonstrates that each component addresses a distinct challenge in the URL pipeline. The \textsc{w/o Relab} variant fails to acquire meaningful skills efficiently during the pre-training phase. These empirical results confirm that our skill relabeling is essential to address off-policy non-stationarity and maintain robust \textbf{data efficiency}. In contrast, \textsc{w/o CIB} achieves high state coverage during pre-training but struggles in the (RS, RG) downstream task. This discrepancy highlights that while standard objectives may increase coverage, they often lead to context-dependent policies. The CIB is therefore essential for disentangling the policy from the global context to ensure \textbf{generalizability}. Collectively, these results confirm that our components are complementary: Relabeling enables efficient learning, while CIB ensures the learned skills are robust and scalable.

\paragraph{$\beta$ Analysis.} We conducted additional skill pre-training experiments across diverse environments using various values of $\beta$ to verify its robustness. The results in Table~\ref{tab:beta_ablation} highlight that while performance varies depending on $\beta$, the model is not hyper-sensitive to the point where slight variations trigger an objective collapse in either numeric or pixel domains. This provides evidence that our proposed objective, driven by skill relabeling and the uniformity regularizer, can be deployed across varied environments without requiring exhaustive per-environment tuning.

\begin{figure}[t]
  \centering
  \begin{subfigure}[b]{0.49\linewidth}
    \centering
    \includegraphics[width=0.95\linewidth]{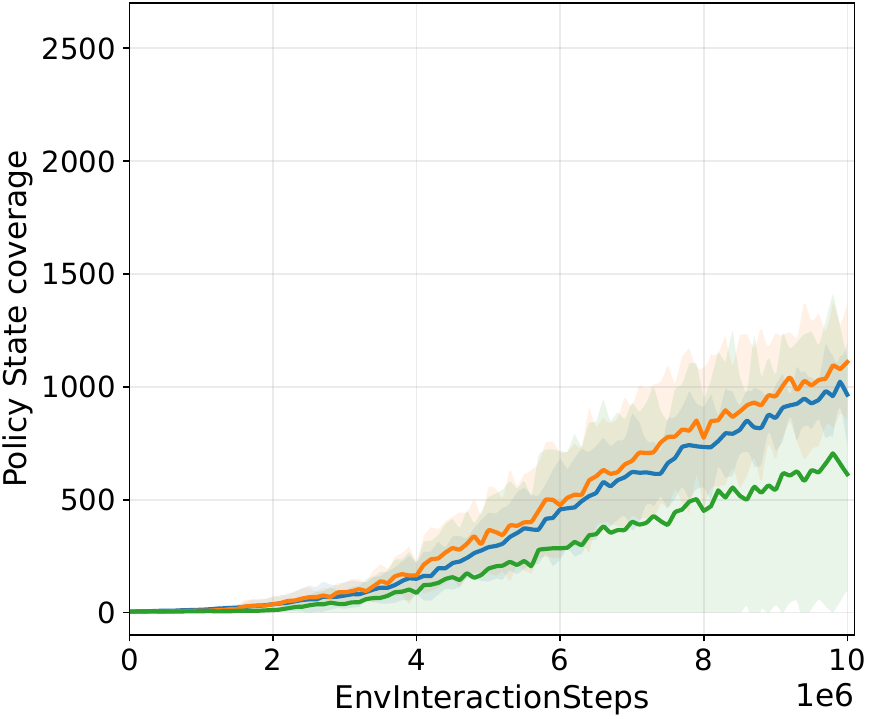}
    \label{fig:a}
    \caption{METRA}
  \end{subfigure}
  \hfill
  \begin{subfigure}[b]{0.49\linewidth}
    \centering
    \includegraphics[width=0.95\linewidth]{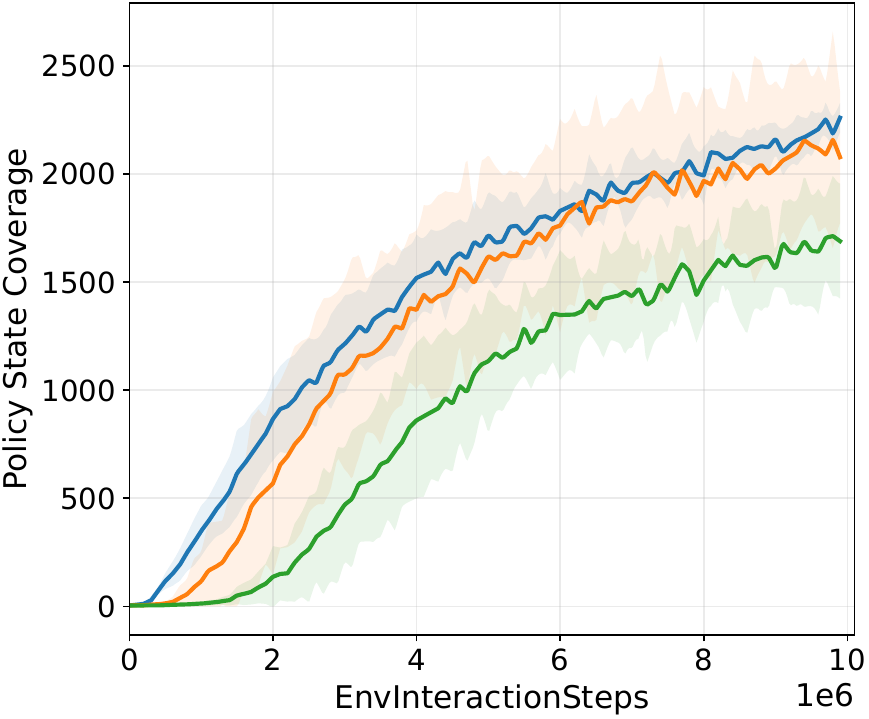}
    \caption{Ours}
    \label{fig:b}
  \end{subfigure}
  \begin{subfigure}[b]{0.6\linewidth}
    \centering
    \includegraphics[width=\linewidth]{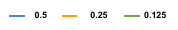}
  \end{subfigure}
  \caption{\textbf{Update-to-Data (UTD) ratio test in Humanoid-Numeric (4 seeds).} \textsc{0.125} is a commonly used setting in prior work.}
  \label{fig:utdtest}
\end{figure}

\subsection{GenDa analysis}
\paragraph{Update-to-Data (UTD) Analysis.}
Figure~\ref{fig:utdtest} serves as empirical evidence for the non-stationary hypothesis.
While baselines suffer from instability at high UTD ratios due to semantic drift, GenDa effectively utilizes frequent updates. This confirms that our skill relabeling turns stale $z$ of the replay buffer into a consistent source of supervision, enhancing the potential of off-policy learning.
This result confirms our theoretical hypothesis in Appendix~\ref{sec:label_noise_relabel}: Variance of semantic drift in replay buffer via relabeling is the key factor enabling data-efficient off-policy learning.

\label{section:ours_offset}
\begin{figure}[h]
  \centering
  \includegraphics[width=0.6\linewidth]{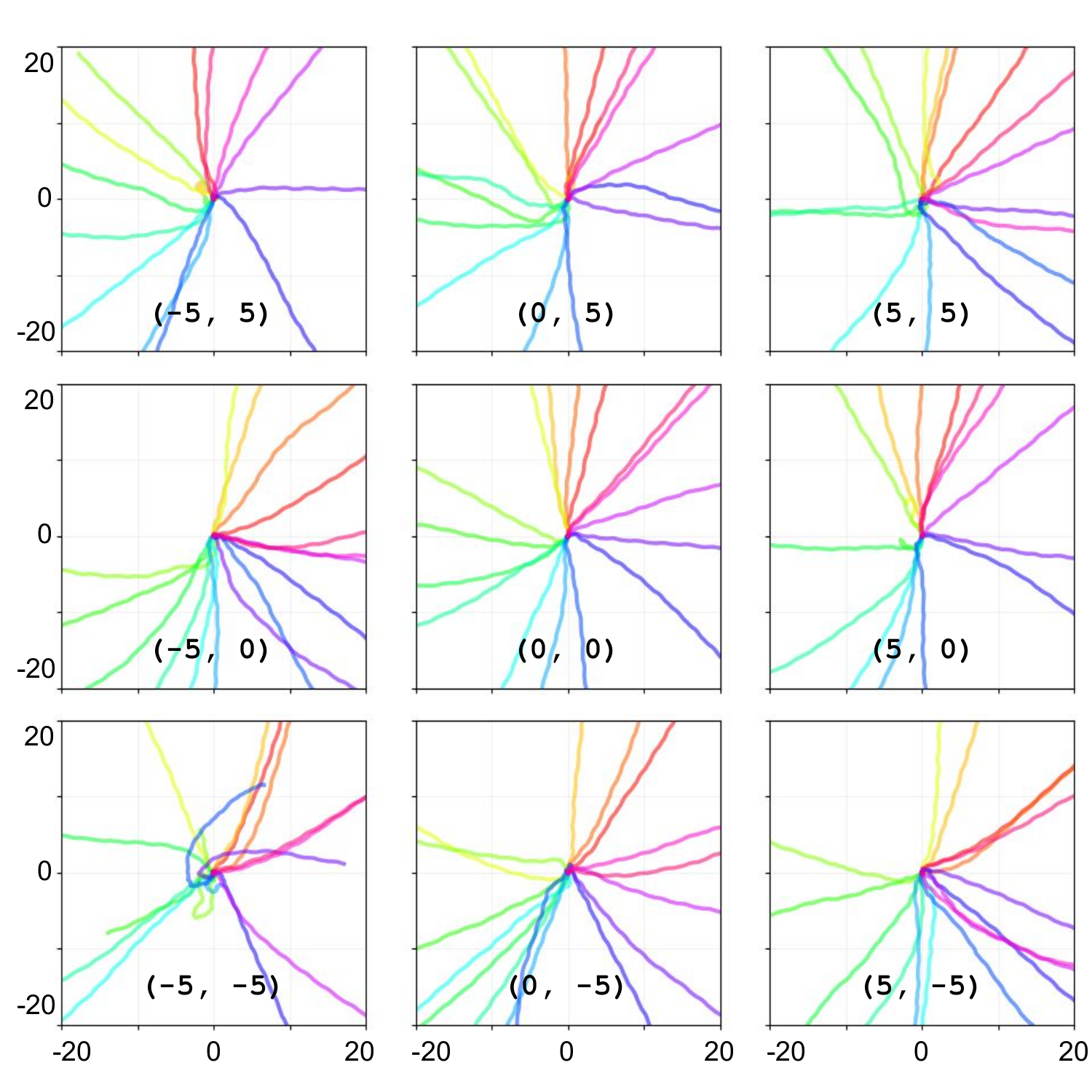}
  \caption{\textbf{Trajectories for various offsets in Humanoid-Numeric Environment.} Same color means the same skill $z$ is given to our skill policy. An offset (a, b) has the same meaning as in Figure~\ref{fig:metraoffset}.}
  \label{fig:oursoffset}
\end{figure}

\begin{table}[ht]
\caption{\textbf{Comparison of performance with unseen background conditions in Quadruped-Pixels (4 seeds).} 'Unicolor' and 'Gradation' denote evaluation settings using four distinct solid colors and gradient patterns, respectively.}
\centering
\begin{tabular}{lcc}
\toprule
& \textbf{Ours} & \textbf{Ours(w/o CIB)} \\
\midrule
\textbf{Original}  & $331 \pm 75$        & $\mathbf{335 \pm 103}$ \\
\textbf{Unicolor}  & $ \mathbf{268 \pm 25 (81\%)}$ & $137 \pm 99 (41\%)$ \\
\textbf{Gradation} & $\mathbf{206 \pm 88 (62\%)}$ & $78 \pm 47 (23\%)$ \\
\bottomrule
\end{tabular}
\label{tab:background_comparison}
\end{table}

\paragraph{Global information robustness.}
To assess our research question 2, we investigate how robust GenDa is to the generalization issue induced by the global context discussed in Section~\ref{sec:skill-policy-generalization}. The qualitative results in Figure~\ref{fig:oursoffset} show GenDa has better generalizability by using CIB that encourages the skill policy to rely on the given skill and its semantic meaning rather than global context.

\paragraph{Complementary encoder analysis.}
To investigate whether our CIB has a complementary relationship with $\phi$, we decode the remaining information from each representation. Figure~\ref{fig:allcve} shows that $\phi$ mainly retains background color while losing recognizable proprioception, whereas CIB retains proprioception while losing background color.

We also tested the skill policy's robustness by applying unseen visual shifts to the background in pixel-based environment.
Consequently, the results in Table~\ref{tab:background_comparison} demonstrate that the CIB provides relative robustness compared to the \textsc{w/o CIB} in pixel environments, but its effectiveness is degraded under the visual shifts. We abstained from using any image augmentation techniques during visual training to ensure a fair comparison with the baselines. This highlights a highly promising direction: because the CIB module structurally disentangles redundant context, incorporating standard visual generalization techniques~\cite{Yarats2021MasteringVC} during CIB training would seamlessly bridge this gap and yield a model highly robust to visual characteristics.
\begin{figure}[t]
  \centering
  \begin{subfigure}[b]{0.95\linewidth}
    \centering
    \includegraphics[
      width=0.95\linewidth,
      trim=0 3.6cm 0 3.6cm, 
      clip
    ]{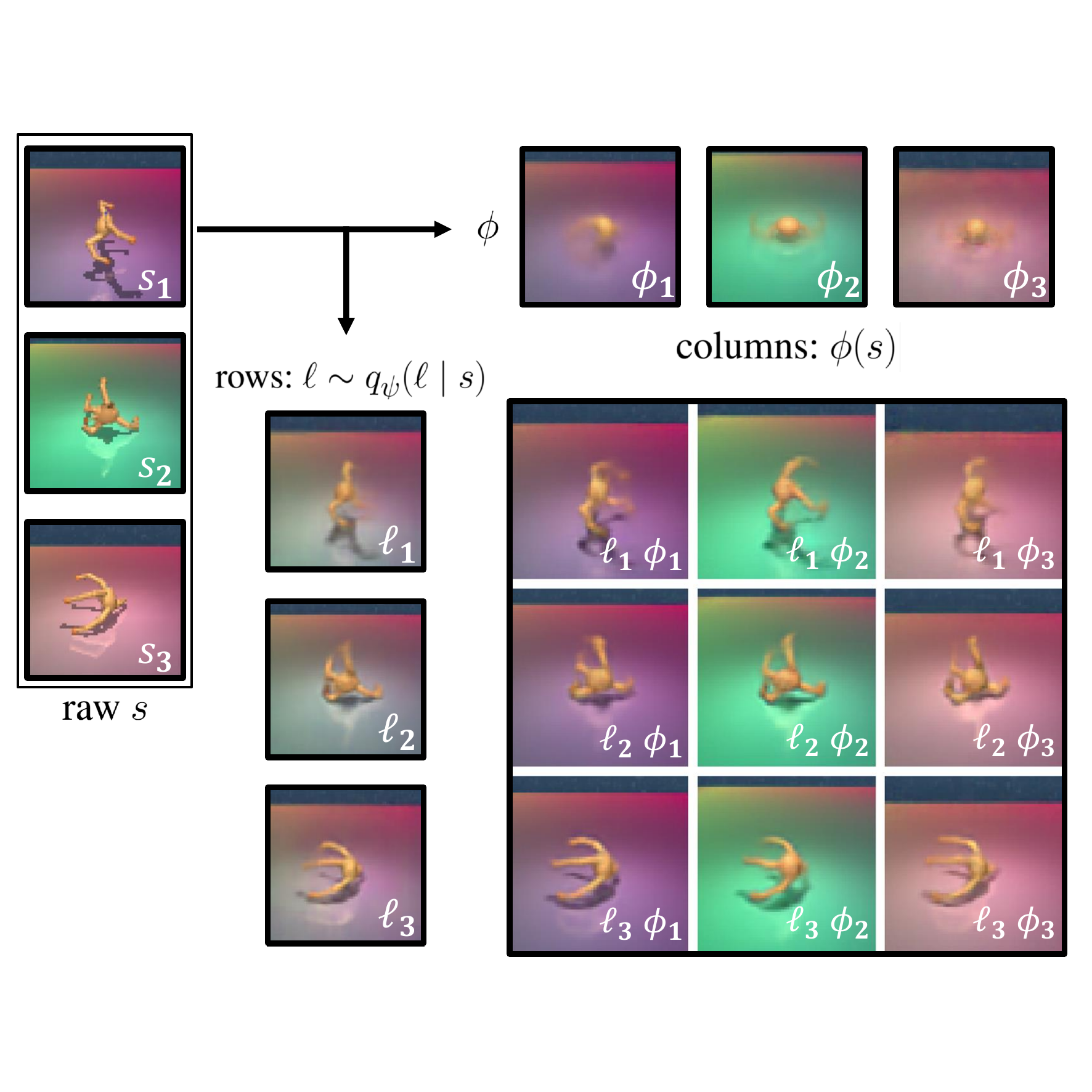}
  \end{subfigure}
  \caption{\textbf{Remaining information of CIB encoder and $\phi$.} For the bottom-right $3\times3$ decoded images, we use the CIB decoder $p(\hat{s}\mid\ell,\phi(s))$. For the decoded images of $\ell_i$ and $\phi_i$, we train independent decoders that do not affect the base encoder.}
  \label{fig:allcve}
\end{figure}

\section{Related Work}
\label{sec:related_work_old_version}

\textbf{Unsupervised Skill Discovery}
Unsupervised reinforcement learning (URL) aims to learn skill representations, policies, or dynamics models through interaction with the environment without any external task reward~\citep{eysenbach2019diayn, wardefarley2019npdr, campos2020edl, hartikainen2020dynamical}, and then reuse them for downstream learning~\citep{zhao2022moss, laskin2022cic, yang2024taskadapt}. Prior unsupervised skill discovery methods train a skill-conditioned policy from states or trajectories by maximizing mutual information objectives~\citep{eysenbach2019diayn, laskin2022cic, strouse2022optimisticskills}. They enable the learning of diverse and distinguishable behaviors even without external rewards. Some approaches further leverage dynamics models to capture mutual information at the state-transition level~\citep{sharma2020dads, mendonca2021lexa}. However, these methods can prioritize discriminability over how far behaviors spread in the environment. As a result, in high-dimensional state spaces, they have been reported to discover redundant or locally confined skills~\citep{park2022lsd, park2024metra, zheng2025csf}. To mitigate this, more recent advances in URL leverage explicit state-space metrics to discover skills with Wasserstein dependency measure~\citep{ozair2019wasserstein, he2022wassersteinurl} and InfoNCE~\citep{oord2018cpc, ma2018nce, henaff2020cpc, myers2024temporaldistances} that transfer better to downstream tasks~\citep{park2024metra, zheng2025csf}. Despite the promise of URL, prior methods face two limitations: (1) data efficiency and (2) policy generalization.

\textbf{Disentanglement in unsupervised skill discovery.}
Some prior work has explored inducing disentanglement by explicitly structuring the latent space via factorization~\cite{hu2024dusdi, wang2024skild}. Another line of work proposes masking the policy input to encourage structured skill usage~\cite{park2025psd}. However, these methods tend to rely on assumptions such as factorized factors or periodic structure. In contrast, our complementary variational encoder (CIB) improves skill reusability by modulating the skill policy's input conditioning, without a hand-crafted structure.

\section{Conclusion}
\subsection{Implication for Unsupervised Skill Learning}
We identify two fundamental challenges in off-policy unsupervised reinforcement learning that limit the scalability of skill discovery. Our skill relabeling mitigates the semantic drift in prior URL methods together with a compatible regularization term. It significantly improves the data efficiency in the pre-training phase.
We also point out a failure mode of the skill policy in downstream tasks, which is caused by global contextual information. With our drop-in CIB module, the generalizability of the skill policy is improved in various downstream tasks.

\subsection{Open Challenges and Future Directions}
\paragraph{Skill relabeling}
Our skill relabeling still inherits a \emph{one-skill-per-episode} assumption that can be violated when a single episode contains heterogeneous behaviors (e.g., exploratory ``moving around'' segments interleaved with skill-consistent transitions). Our relabeling target is derived from the telescoping sum; thus, it can be generalized to any arbitrary time segment. So, our skill relabeling framework is structurally and mathematically equipped to handle intra-episode skill changes.
We will explore extending relabeling to assigning time-varying skill labels, which may better capture abrupt changes in intent, such as sharp direction switches or rapid reactive adjustments.

\paragraph{Generalization}
CIB is designed to disentangle the global context from the skill policy; its effectiveness depends on the learned metric-aware latent function $\phi$; if one wishes to apply the idea to non-metric-aware skill discovery algorithms, this dependency can limit applicability. We consider it a feasible future direction to use \emph{dynamics-aware} objectives rather than a particular embedding metric.

\paragraph{Manipulation task}
Our metric-aware objective, which maximizes temporal state distances, inherently encourages large-scale movements. This objective hinders the acquisition of the fine-grained, small-scale behaviors required for precise manipulation. Addressing this will require incorporating magnitude-conditioned skill mechanisms, such as PSD~\cite{park2025psd}, to balance large explorations with delicate object manipulation.

\section*{Impact Statement}
This paper presents work whose goal is to advance the field of Machine Learning. There are many potential societal consequences of our work, none of which we feel must be specifically highlighted here.

\section*{Acknowledgements}
This work was partly supported by the National Research Foundation of Korea (NRF) (No. RS-2024-00348376), by the Institute of Information $\&$ Communications Technology Planning $\&$ Evaluation (IITP) (No. RS-2024-00438686, No. RS-2022-II221045, No. RS-2025-02218768, No. RS-2019-II190421) grant funded by the Korea government (MSIT), and by the Industrial Technology Innovation Program (No. RS-2025-25448266) funded by the Ministry of Trade, Industry $\&$ Energy (MOTIE) and Korea Evaluation Institute of Industrial Technology (KEIT).

\nocite{*} 
\bibliography{references}
\bibliographystyle{icml2026}

\newpage
\appendix
\onecolumn

\section{Relabel-and-Optimize Procedure}
\label{sec:well_posedness}

\paragraph{Setting.}
Let $B$ be a fixed replay buffer containing either transitions $(s,s',z)$ or trajectories $\tau$ with endpoints $(s_0(\tau), s_T(\tau))$.
Let $V \subseteq \mathcal{S}$ denote the finite set of states that appear in the dataset (tabular setting), and restrict the optimization variable to
\[
\phi: V \to \mathbb{R}^d.
\]
Hence $\phi$ can be identified with a vector in the finite-dimensional space $\mathbb{R}^{d|V|}$.

We encode the Lipschitz constraints via an undirected graph $G=(V,E)$:
\[
\{u,v\}\in E \iff \text{a transition } u\to v \text{ or } v\to u \text{ is observed in the dataset}.
\]
We fix the gauge by choosing an anchor state $s_0\in V$ and enforcing $\phi(s_0)=0$.

\begin{assumption}[Support-connectedness / gauge fixing]
\label{assump:support_connectedness}
All states that appear in the $\phi$-step objective belong to the connected component $C(s_0)$ of the anchor $s_0$ in $G$.
Equivalently, restricting $G$ to $C(s_0)$ yields a connected graph.
(Alternatively, one may place one anchor per connected component; we use this assumption for notational simplicity.)
\end{assumption}

\paragraph{Relabeling map.}
For $\varepsilon>0$, define
\[
g(\phi; a,b) \;=\; \frac{\phi(b)-\phi(a)}{\max(\|\phi(b)-\phi(a)\|,\ \varepsilon)}.
\]
Then $g$ is well-defined for all $(a,b)$ and continuous, with $\|g(\phi;a,b)\|\le 1$.
While $g$ may not be differentiable on the boundary $\|\phi(b)-\phi(a)\|=\varepsilon$, differentiability is not required for the existence results below.

\paragraph{Relabel-and-optimize procedure.}
Given the current $\phi$,
\begin{enumerate}
\item \textbf{(z-step; relabel)} For each trajectory $\tau$, set
\[
z_\tau \leftarrow g(\phi;\ s_0(\tau),\ s_T(\tau)).
\]
\item \textbf{($\phi$-step; optimize)} Treat $\{z_\tau\}$ as constants during the $\phi$-update (i.e., apply a stop-gradient through $z_\tau$) and solve
\begin{align}
\max_{\phi}\quad & \sum_{\tau} \left\langle \phi\big(s_T(\tau)\big)-\phi\big(s_0(\tau)\big),\ z_\tau \right\rangle
\label{eq:phi_step_obj}\\
\text{s.t.}\quad & \|\phi(u)-\phi(v)\| \le 1 \quad \forall \{u,v\}\in E, \label{eq:lipschitz_constraint}\\
& \phi(s_0)=0. \label{eq:anchor_constraint}
\end{align}
\end{enumerate}
We emphasize that we do not claim global monotonic improvement or convergence of the full iteration without additional structure; instead, we establish that each step is well-defined and that the $\phi$-step admits an optimizer at every iteration.

\begin{proof}[Proof Sketch]
\textbf{Feasibility.}
The feasible set is non-empty since $\phi(s)\equiv 0$ for all $s\in V$ satisfies \eqref{eq:lipschitz_constraint} and \eqref{eq:anchor_constraint}.

\textbf{Boundedness.}
Fix any $s\in C(s_0)$.
Let $s_0=v_0,v_1,\dots,v_L=s$ be a shortest path in $G$.
By the triangle inequality and the Lipschitz constraints,
\[
\|\phi(s)\|=\|\phi(v_L)-\phi(v_0)\|
\le \sum_{i=1}^L \|\phi(v_i)-\phi(v_{i-1})\|
\le \sum_{i=1}^L 1
= L
= \mathrm{dist}_G(s_0,s).
\]
Thus each coordinate block $\phi(s)$ is bounded over the feasible set.

\textbf{Compactness.}
Since $V$ is finite, $\phi$ lies in a finite-dimensional Euclidean space.
Constraints \eqref{eq:lipschitz_constraint}--\eqref{eq:anchor_constraint} define a closed set (inverse images of continuous functions), and the feasible set is bounded by the previous step; hence it is compact (Heine--Borel).

\textbf{Existence.}
The objective \eqref{eq:phi_step_obj} is linear (hence continuous) in $\phi$ and the feasible set is compact, so an optimizer exists by the Weierstrass theorem.

\textbf{Global optimality.}
The objective is linear (thus concave) and the constraints are convex, so this is a convex optimization problem. Therefore any optimizer is globally optimal.
\end{proof}

\begin{proposition}[Existence and global optimality of the $\phi$-step solution]
\label{prop:phi_step_existence}
Under Assumption~\ref{assump:support_connectedness} (tabular setting), for any fixed relabeled signals $\{z_\tau\}$, the $\phi$-step problem
\eqref{eq:phi_step_obj}--\eqref{eq:anchor_constraint} admits at least one optimal solution $\phi^\star$.
Moreover, the problem is a convex optimization problem, hence any optimizer is globally optimal.
\end{proposition}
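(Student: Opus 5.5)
The plan follows the standard Weierstrass route: show the $\phi$-step problem is a continuous objective maximized over a nonempty compact subset of a finite-dimensional space, then observe it is convex.

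\textbf{Setting up the space.} Since $V$ is finite, I identify $\phi$ with a vector in $\mathbb{R}^{d|V|}$. The relabeled signals $\{z_\tau\}$ are frozen by the stop-gradient, so they act as constant vectors with $\|z_\tau\|\le 1$. The objective \eqref{eq:phi_step_obj} is then a finite sum of inner products of fixed vectors with linear functions of $\phi$. Hence it is a linear functional on $\mathbb{R}^{d|V|}$, and in particular continuous and concave.

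\textbf{Nonempty, closed, convex feasible set.} The zero map $\phi\equiv 0$ satisfies \eqref{eq:lipschitz_constraint} and \eqref{eq:anchor_constraint}, so the feasible set is nonempty. Each constraint $\|\phi(u)-\phi(v)\|\le 1$ is a sublevel set of a convex continuous function of $\phi$, namely a norm composed with a linear map, so it is closed and convex. The anchor constraint $\phi(s_0)=0$ is an affine subspace, also closed and convex. The feasible set is an intersection of these finitely many sets, so it is closed and convex.

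\textbf{Boundedness.} This is the only step that needs Assumption~\ref{assump:support_connectedness}. For any $s\in C(s_0)$, I pick a path $s_0=v_0,\dots,v_L=s$ in $G$. The triangle inequality together with \eqref{eq:lipschitz_constraint} and the anchor give $\|\phi(s)\|\le \mathrm{dist}_G(s_0,s)<\infty$.

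The obstacle I expect is states outside $C(s_0)$. Their values are unconstrained by any anchor, so the feasible set in all of $\mathbb{R}^{d|V|}$ is genuinely unbounded. I would handle this in one of two ways:
\begin{itemize}
\item Use the assumption literally: restrict the optimization variable to $\phi: C(s_0)\to\mathbb{R}^d$. This is harmless because the objective only involves states in $C(s_0)$, and any values on the other states can be set to $0$ without affecting feasibility or the objective.
\item Alternatively, place one anchor per component, as the parenthetical in the assumption permits. The same path bound then holds within each component.
\end{itemize}
Either way, every block $\phi(s)$ in the reduced problem is bounded.

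\textbf{Existence and global optimality.} The reduced feasible set is closed and bounded in finite dimensions, hence compact by Heine--Borel. A continuous (linear) objective on a nonempty compact set attains its maximum by the Weierstrass extreme value theorem, giving $\phi^\star$. Any optimizer of the reduced problem extends to an optimizer of the original problem by the zero-extension just described.

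Finally, the problem maximizes a concave (linear) function over a convex set. In such a problem every local maximizer is global: if a local maximizer $\phi_1$ had $f(\phi_2)>f(\phi_1)$ for some feasible $\phi_2$, then points on the segment between them arbitrarily close to $\phi_1$ would be feasible and strictly better, a contradiction. Hence any optimizer, including any stationary/KKT point the $\phi$-step might return, is globally optimal.
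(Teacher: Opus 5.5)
Your proposal is correct and follows essentially the same route as the paper's proof sketch: feasibility via $\phi\equiv 0$, the shortest-path triangle-inequality bound $\|\phi(s)\|\le \mathrm{dist}_G(s_0,s)$, Heine--Borel compactness, Weierstrass for existence, and a linear objective over a convex feasible set for global optimality. Your explicit handling of states outside $C(s_0)$ tightens a point the paper glosses over. The paper's sketch bounds $\phi(s)$ only for $s\in C(s_0)$, yet asserts compactness of the whole feasible set in $\mathbb{R}^{d|V|}$, which is unbounded when $V\neq C(s_0)$; you fix this by restricting to $C(s_0)$ and zero-extending.
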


\begin{corollary}[Well-posedness of each iteration]
\label{cor:well_posedness}
For $\varepsilon>0$, the relabeling map is well-defined, and by Proposition~\ref{prop:phi_step_existence} the $\phi$-step admits an optimizer at every iteration. Hence the relabel-and-optimize procedure is well-posed.
\end{corollary}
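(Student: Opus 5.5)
The statement to prove is Corollary~\ref{cor:well_posedness}. It has two parts: the $z$-step is well-defined, and the $\phi$-step is solvable at every iteration. I would prove each part separately and then combine them by induction over the iterations.

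\textbf{The $z$-step.} Fix $\varepsilon>0$ and any $\phi:V\to\mathbb{R}^d$. For every trajectory $\tau$ in $B$, the denominator of $g(\phi;s_0(\tau),s_T(\tau))$ is $\max(\|\phi(s_T)-\phi(s_0)\|,\varepsilon)\ge\varepsilon>0$. So $z_\tau$ is a uniquely determined vector in $\mathbb{R}^d$ with $\|z_\tau\|\le 1$. This covers the degenerate case $\phi(s_T)=\phi(s_0)$, where $z_\tau=0$. No division by zero or arbitrary choice occurs. In addition, $g$ is a composition of continuous maps, since $\max(\cdot,\varepsilon)$ is continuous and bounded away from zero. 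The map $\phi\mapsto\{z_\tau\}$ is therefore a continuous single-valued function on $\mathbb{R}^{d|V|}$. Continuity is not needed for well-posedness, but it rules out pathologies.

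\textbf{The $\phi$-step.} Once $\{z_\tau\}$ is fixed as constants (the stop-gradient), problem \eqref{eq:phi_step_obj}--\eqref{eq:anchor_constraint} is exactly the problem treated in Proposition~\ref{prop:phi_step_existence}. Because $B$ is fixed, the sets $V$ and $E$ are the same at every iteration, so Assumption~\ref{assump:support_connectedness} holds at every iteration. Invoking the proposition gives an optimizer $\phi^\star$ that is globally optimal, since the problem is convex. The only point to check is that the proposition holds for \emph{arbitrary} fixed signals. The objective remains linear for any constant $z_\tau$, and the feasible set does not depend on $z$, so this is immediate.

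\textbf{Induction and the main obstacle.} Start from any feasible $\phi_0$, for example $\phi\equiv0$. If $\phi_k$ is a map $V\to\mathbb{R}^d$, the $z$-step produces well-defined labels, and the $\phi$-step then produces some optimizer $\phi_{k+1}$, which is again such a map. So every iterate exists. Optimizers may not be unique, since a linear objective over a convex set can have a face of optima. I would therefore state well-posedness as \emph{existence} at each step, with any selection rule (for example, any measurable choice or the solver's output) fixing one iterate. I expect this non-uniqueness to be the only subtle point. I would not claim uniqueness or convergence, in line with the remark preceding the proof sketch.
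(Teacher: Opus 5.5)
Your proposal is correct and follows the paper's route. The $z$-step is well-defined because $\max(\|\phi(b)-\phi(a)\|,\varepsilon)\ge\varepsilon>0$, and the $\phi$-step at each iteration is settled by invoking Proposition~\ref{prop:phi_step_existence} for arbitrary fixed $\{z_\tau\}$, whose feasible set and Assumption~\ref{assump:support_connectedness} do not depend on the labels. Your explicit induction over iterations and your remark that optimizers need not be unique (so well-posedness means existence at each step plus some selection rule) are refinements of points the paper leaves implicit.
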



\section{Relabeling under Non-Stationary Label Noise}

\subsection{Non-Stationary with a fixed z label.}
\label{sec:label_noise}
\paragraph{Noise model} For each trajectory $\tau$ with endpoints $(s_0(\tau), s_T(\tau))$, suppose the original label $z_\tau$ is noisy:
\[
\mathbb{E}[z_\tau \mid s_0,s_T] = \mu(s_0,s_T),\qquad
\mathrm{Var}(z_\tau \mid s_0,s_T) = \Sigma(s_0,s_T).
\]
In non-stationary settings, the conditional variance $\Sigma(s_0,s_T)$ may vary across endpoint pairs, and typically $\Sigma(s_0,s_T)\neq 0$ on parts of the support.

\paragraph{Alignment scalar and conditional variability.}
For a fixed $\phi$, define the per-trajectory alignment contribution
\[
Y_\tau(\phi) \;=\; \left\langle \phi\big(s_T(\tau)\big)-\phi\big(s_0(\tau)\big),\ z_\tau \right\rangle.
\]
When $\Sigma(s_0,s_T)\neq 0$, the conditional variance of $Y_\tau(\phi)$ generally remains positive:
\[
\mathrm{Var}\big(Y_\tau(\phi)\mid s_0,s_T,\phi\big) > 0,
\]
reflecting that even for the same endpoint pair, label noise induces variability in the alignment term.

\subsection{$z$ relabeling and label noise.}
\label{sec:label_noise_relabel}
\paragraph{Relabeling removes conditional label-noise variability (from the $\phi$-step perspective).}
Define the relabeled signal deterministically via
\[
\tilde z_\tau \;=\; g(\phi;\ s_0(\tau), s_T(\tau)),
\]
and in the $\phi$-step treat $\tilde z_\tau$ as a constant (stop-gradient).
Then the relabeled alignment term
\[
\tilde Y_\tau(\phi) \;=\; \left\langle \phi\big(s_T(\tau)\big)-\phi\big(s_0(\tau)\big),\ \tilde z_\tau \right\rangle
\]
satisfies
\[
\mathrm{Var}\big(\tilde Y_\tau(\phi)\mid s_0,s_T,\phi\big)=0.
\]
Thus, from the perspective of the $\phi$-step update, conditional variability due to label noise is removed.

\paragraph{Implication: potential reduction in stochastic-gradient variance (with possible bias).}
Because the per-sample gradient of \eqref{eq:phi_step_obj} is affine (here linear) in the label $z$, the law of total variance suggests decomposing stochastic-gradient variance into (i) a component attributable to conditional label noise and (ii) a component attributable to sampling.
When using the original noisy labels, the former component persists whenever $\Sigma(s_0,s_T)\neq 0$.
When using relabeling (and treating $\tilde z_\tau$ as constant in the $\phi$-step), this conditional label-noise component can drop to zero, potentially reducing gradient variance and improving update stability in non-stationary regimes.

However, since $\tilde z_\tau$ depends on $\phi$, relabeling can introduce bias relative to updates driven by the original noisy labels.
Hence relabeling reflects a bias--variance trade-off: it may improve stability when label noise dominates, provided the induced bias remains controlled.

\begin{figure}[htbp]
  \centering

  \begin{subfigure}[b]{0.48\linewidth}
    \centering
    \includegraphics[width=\linewidth]{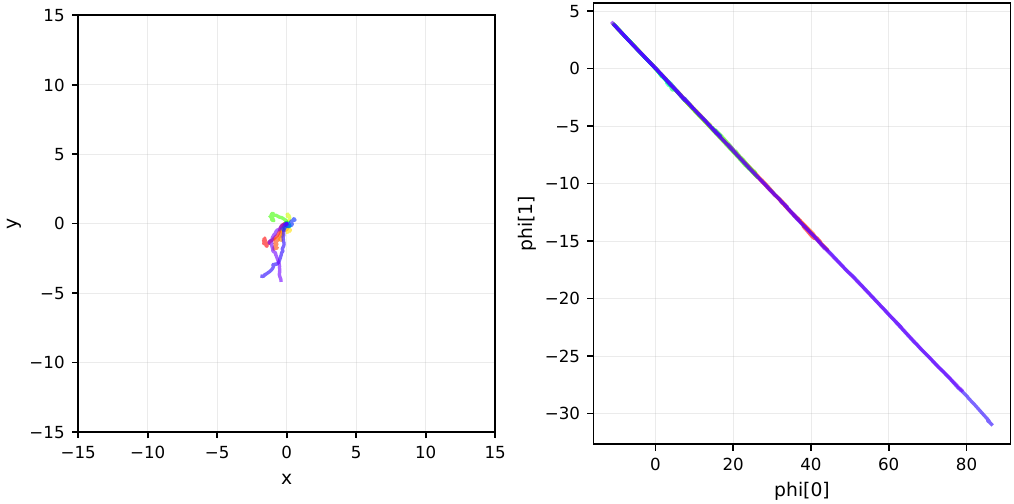}
    \caption{Ours w/o $\mathcal{L}_{unif}$}
  \end{subfigure}\hfill
  \begin{subfigure}[b]{0.48\linewidth}
    \centering
    \includegraphics[width=\linewidth]{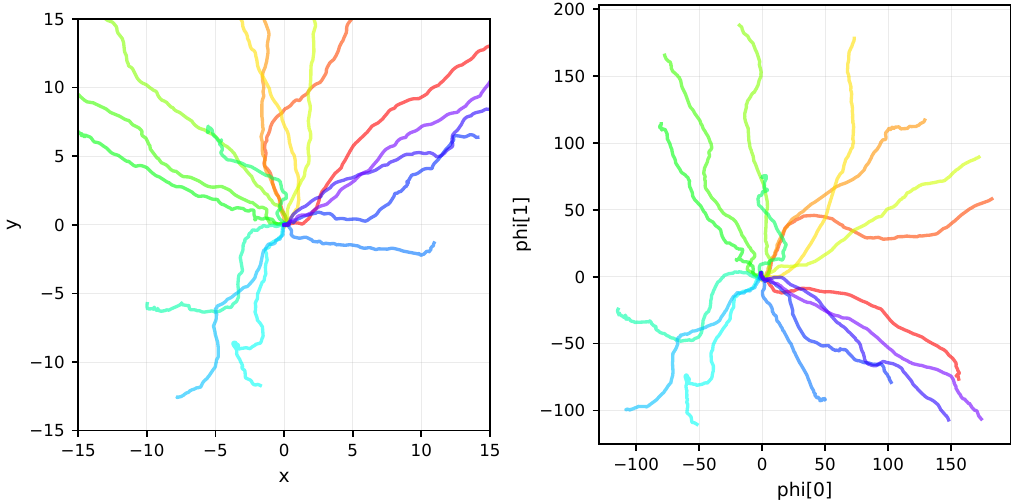}
    \caption{Ours w $\mathcal{L}_{unif}$}
  \end{subfigure}\hfill
  \par\medskip  
\text{Left of each pair: ($\tau_z$),
 Right of each pair: ($\phi(\tau_z)$)
 }

  \caption{\textbf{Trajectory $\tau_z$ and its latent $\phi(\tau_z)$ in Humanoid-Numeric at 1M timesteps.} $\tau_z$ is a $z$-conditioned trajectory, and the same color indicates the same skill $z$ is given. Our uniformity term can encourage the latent space to have diverse directions spread in the early step of learning.}
\label{fig:phispread}
\end{figure}

\section{Policy Relabeling Analysis.}
\label{fig:policyrealb}
\begin{figure}[h]
  \centering
  \begin{subfigure}[b]{0.45\linewidth}
    \centering
    \includegraphics[width=0.95\linewidth]{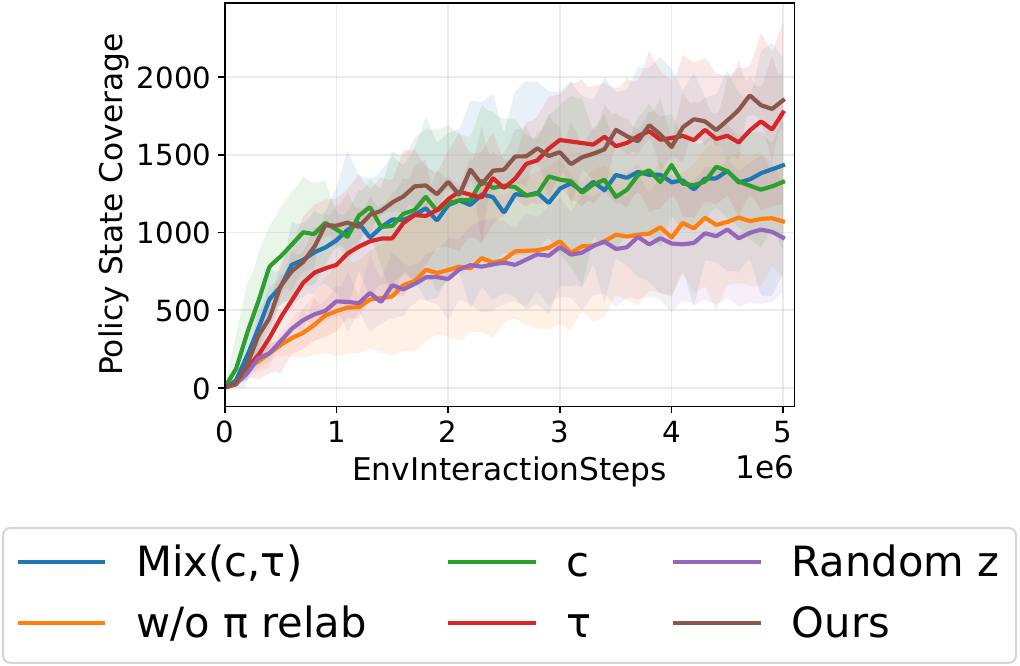}
  \end{subfigure}
  \caption{\textbf{Skill policy intrinsic reward z in Quadruped-Numeric(4 seeds).} \textsc{$z_{roll}$}: Using fixed z only, \textsc{c}: c-step relabel, \textsc{$\tau$}: $0\rightarrow T$ relabel, \textsc{Random z}: newly sampled random z, \textsc{Ours}: Mix(c, $\tau$, $z_{roll}$)}  \label{fig:polab}
\end{figure}

To investigate the proper $z$ conditions for the intrinsic reward function, we evaluate multiple relabeling strategies that can be used for selecting target $z$ and report their comparative results in Figure~\ref{fig:polab}.

For Mix(c,$\tau$), we use a 0.5:0.5 ratio. For Ours, we compute the probability of $z_{\text{roll}}$ using a count-based estimator with respect to the $z_{\text{relab}}$ directions of the episodes stored in the replay buffer. This estimator is updated like an exponential moving average (EMA) each time a $z_{\text{roll}}$ is sampled from the buffer for learning. If the resulting probability is lower than threshold $\epsilon_{z}=0.4$, we keep the corresponding $z_{\text{roll}}$; otherwise, we relabel it using $\mathrm{Mix}(c,\tau)$. We also test multiple settings for the c-$\tau$ mixture ratio and the threshold $\epsilon_{z}$ in Table~\ref{tab:ablation_ratio_threshold}, but observe only marginal differences in performance.

\begin{table}[h]
\caption{\textbf{State coverage corresponding to different mixture ratios and thresholds in Quadruped-Numeric} (3 seeds).}
\centering
\begin{tabular}{lc}
\toprule
\textbf{$c - \tau$ mixture ratio} & \textbf{State Coverage} \\
\midrule
\textbf{0.5:0.5 (original)} & $1851 \pm 123$ \\
\textbf{0.25:0.75}          & $1820 \pm 241$ \\
\textbf{0.75:0.25}          & $1842 \pm 229$ \\
\bottomrule
\end{tabular}

\begin{tabular}{lc}
\toprule
\textbf{Threshold for $z_{roll}$} & \textbf{State Coverage} \\
\midrule
\textbf{0.4 (original)} & $1851 \pm 123$ \\
\textbf{0.8}            & $1817 \pm 245$ \\
\textbf{0.95}           & $1678 \pm 318$ \\
\bottomrule
\end{tabular}
\label{tab:ablation_ratio_threshold}
\end{table}

\section{Implementation details}
\label{ab:imples}
We implement our algorithm and baselines with JAX for faster training. Our code is based on the official PyTorch implementation and paper of each baseline. For stable learning, we use the LayerNorm in the critic networks of both our method and the baselines. For CIC and DIAYN, we use the hyperparameters used in the official CSF implementation. For METRA, CSF and Ours, we use same hyperparameters except high-level controllers in downstream tasks that are represented in Table~\ref{table:hyperparams}.

\begin{figure*}[h]
\centering
\captionsetup[subfigure]{justification=centering,singlelinecheck=false}
    \begin{subfigure}[h]{0.23\linewidth}
    \centering
    \includegraphics[width=\linewidth]{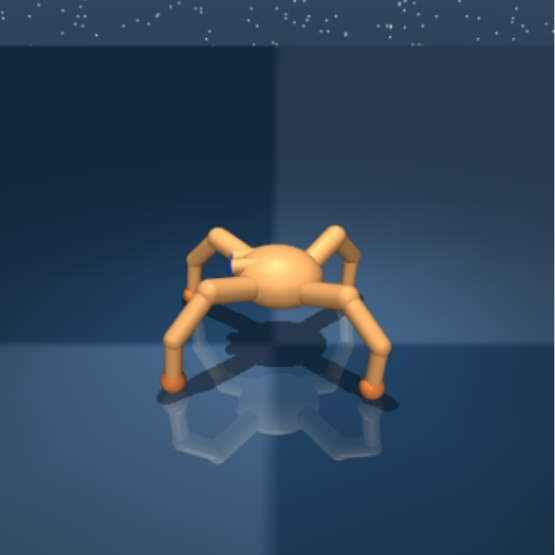}
    \caption{Quadruped-Numeric}
    \end{subfigure}
    \begin{subfigure}[h]{0.23\linewidth}
    \centering
    \includegraphics[width=\linewidth]{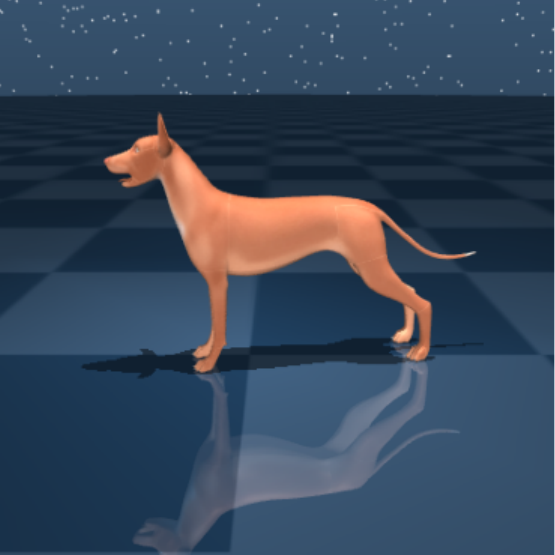}
    \caption{Dog-Numeric}
    \end{subfigure}
    \begin{subfigure}[h]{0.23\linewidth}
    \centering
    \includegraphics[width=\linewidth]{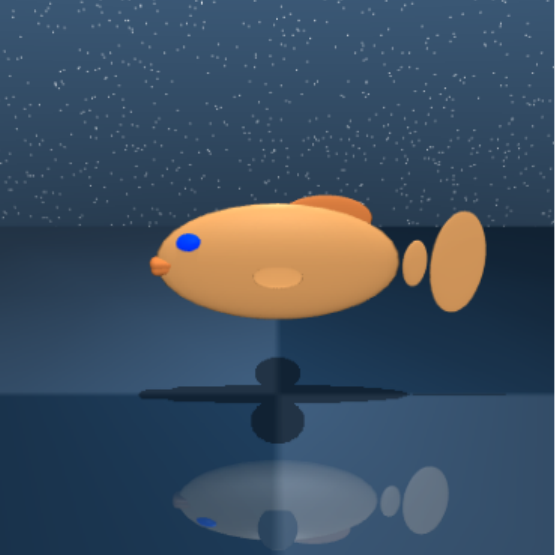}
    \caption{Fish-Numeric}
    \end{subfigure}
    \begin{subfigure}[h]{0.23\linewidth}
      \centering
      \includegraphics[width=\linewidth]{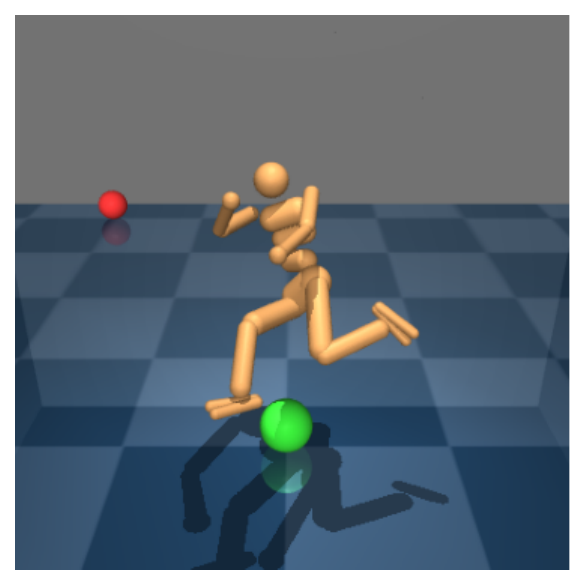}
      \caption{Humanoid and Goal}
    \end{subfigure}
    \begin{subfigure}[h]{0.23\linewidth}
    \centering
    \includegraphics[width=\linewidth]{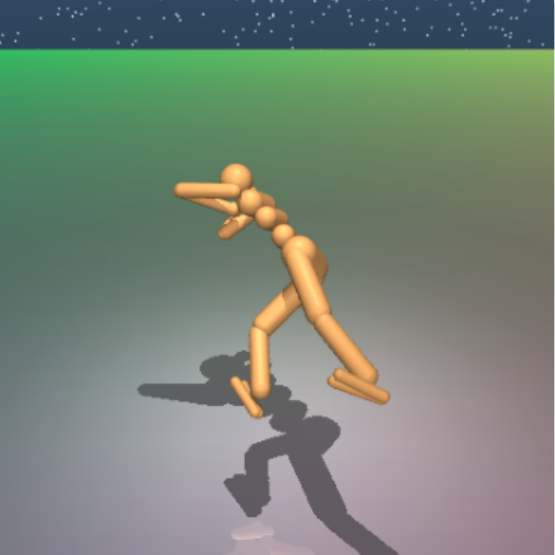}
    \caption{Humanoid-Pixels}
    \end{subfigure}
    \begin{subfigure}[h]{0.23\linewidth}
    \centering
    \includegraphics[width=\linewidth]{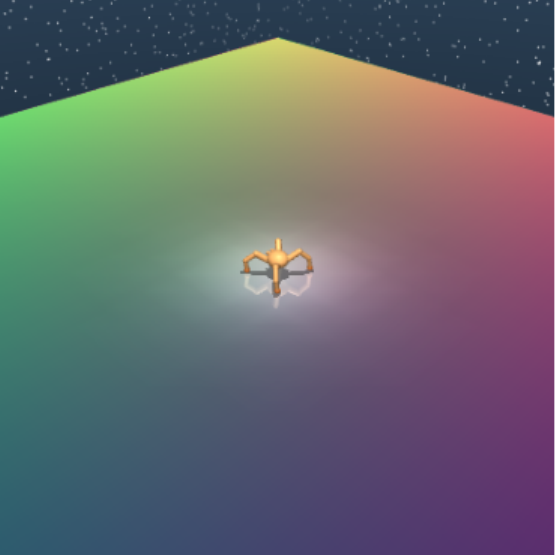}
    \caption{Quadruped-Pixels}
    \end{subfigure}
    \begin{subfigure}[h]{0.23\linewidth}
    \centering
    \includegraphics[width=\linewidth]{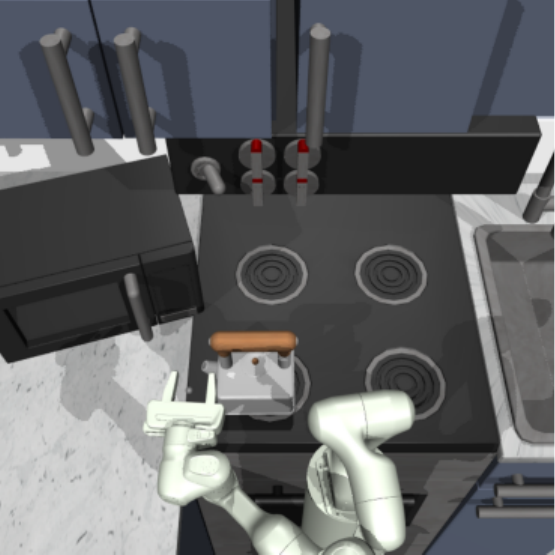}
    \caption{Kitchen-Pixels}
    \end{subfigure}
\caption{\textbf{Benchmark environments for various domains.}
}
\label{fig:envs}
\end{figure*}

\subsection{Environment settings}
\begin{table}[h]
  \caption{\textbf{Environment parameters.}}
  \label{table:sample-table}
  \begin{center}
    \begin{small}
      \begin{sc}
        \begin{tabular}{lccccr}
          \toprule
          Env  & $s$ dim & $a$ dim & Frame stacks(Action repeats) & eps. length  \\
          \midrule
           Humanoid-Numeric & 70 & 21 & 1(2) & 200 \\
           Quadruped-Numeric & 81 & 12 & 1(2) & 200 \\
           Dog-Numeric & 226 & 38 &  1(2) & 200 \\
           Fish-Numeric & 27 & 5 &  1(2) & 200 \\
           Humanoid-Pixels & 64$\times$64$\times$3 & 21 &  3(2) & 200 \\
           Quadruped-Pixels & 64$\times$64$\times$3 & 12 &  3(2) & 200 \\
           Kitchen-Pixels & 64$\times$64$\times$3 & 7 &  3(1) & 50 \\
          \bottomrule
        \end{tabular}
      \end{sc}
    \end{small}
  \end{center}
  \vskip -0.1in
\end{table}
\paragraph{Downstream tasks.} To evaluate performance on downstream tasks, we utilize 6 domains, each containing 1 to 4 specific tasks.
High-level controllers $\pi^{h}(z|s,s^{task})$ selects a skill every K=25 environment steps, where $s^{task}$ denotes the task information (like desired goal).

\textbf{Maze tasks.} The agent navigates a $7 \times 7$ maze to reach a goal. During training, both the agent and the goal are initialized at random feasible positions, with a minimum separation distance of 3. During evaluation, the agent starts from a fixed location $(1,0)$. The goal is placed at $(5,6)$ for \textit{Easy} and at $(1,6)$ for \textit{Hard}. For the Quadruped agent, we scale the maze dimensions by a factor of $1.5$ to account for its larger body size. We set the episode horizon to 400 environment steps for \textit{Easy} and 500 steps for \textit{Hard}.

\textbf{(RS, RG) tasks (Quadruped, Humanoid, Dog-Numeric).} Both the agent and the goal are initialized uniformly at random in $[-3.5, 3.5]^2$, with a minimum separation distance of 3. As in the Maze tasks, we increase the environment scale by a factor of $1.5$ for the Quadruped agent.

\textbf{(FS, RG) tasks.} The agent starts from the origin $(0,0)$, while the goal is sampled randomly. Specifically, the goal is sampled from $[-3.5, 3.5]^2$ for Humanoid-Numeric and from $[-5, 5]^2$ for Humanoid and Quadruped-Pixels. In the Fish environment, the agent starts from a fixed position with a random orientation (quaternion), and the goal is placed 0.5 units ahead along the agent's forward direction. As in the Maze tasks, we increase the environment scale by a factor of $1.5$ for the Quadruped agent.

For (FS, RG) and (RS, RG), we set the episode horizon to 200 environment steps for \textit{Quadruped (Num/Pix)}, \textit{Humanoid (Num/Pix)}, and \textit{Fish}, and to 400 steps for \textit{Dog-Numeric}.

\begin{table}[h]
  \caption{\textbf{Hyperparameters for unsupervised skill discovery methods.}}
  \label{table:hyperparams}
  \begin{center}
    \begin{small}
        \begin{tabular}{lr}
          \toprule
          Hyperparameter  & Value   \\
          \midrule
          Learning rate & 0.0001 \\
          Optimizer & Adam \\
          \# episodes per epoch & 2(-Numeric), 8(Kitchen), 4(Humanoid, Quadruped-Pixels)\\
          \# gradient steps per epoch & 200 (others), 50 (Kitchen)\\
          Minibatch size & 256 \\
          Discount factor & 0.99\\
          Replay buffer size & $10^6$(-Numeric), $3\times10^5$(-Pixels) \\
          Encoder & CNN \\
          \# hidden layers & 2\\
          \# hidden units per layer & 1024 \\
          Target network smoothing coefficient & 0.995 \\
          Entropy coefficient & 0.01(Kitchen), auto-adjust(others)\\
          Ours $z$ dim & 2 (Humanoid-, Quadruped-, Dog-), 3 (Fish-), 10 (Kitchen) \\
          METRA, CSF $z$ dim & 2 (Humanoid-, Quadruped-, Dog-), 3 (Fish-), 24-discrete (Kitchen)\\
          DIAYN $z$ dim & 50-discrete\\
          CIC $z$ dim & 64\\
          Ours, METRA $\epsilon_{\lambda}$ & $10^{-3}$ \\
          Ours, METRA initial $\lambda$ & 30 \\
          Ours $\ell$ dim & 256(-Numeric), 2048(-Pixels) \\
          Ours $\beta$ & 1.0 \\          
          \bottomrule
        \end{tabular}
    \end{small}
  \end{center}
  \vskip -0.1in
\end{table}

\begin{table}[h]
  \caption{\textbf{Hyperparameters for SAC in high-level controllers.}}
  \label{table:hyperparams}
  \begin{center}
    \begin{small}
        \begin{tabular}{lr}
          \toprule
          Hyperparameter  & Value   \\
          \midrule
          Actor learning rate & 0.0003 \\
          Critic learning rate & 0.0003 \\
          Entropy coefficient learning rate & 0.001 \\
          Action frequency & 25 \\ 
          Optimizer & Adam \\
          \# episodes per epoch & 8\\
          \# gradient steps per epoch & 200 (others), 50 (Kitchen)\\
          Minibatch size & 256 \\
          Discount factor & 0.99\\
          Replay buffer size & $10^6$(-Numeric), $3\times10^5$(-Pixels) \\
          Encoder & MLP for state inputs; CNN for pixel inputs \\
          \# hidden layers & 2\\
          \# hidden units per layer & 1024 \\
          Target network smoothing coefficient & 0.995 \\
          Entropy coefficient & 0.01 (Kitchen), auto-adjust (others)\\    
          \bottomrule
        \end{tabular}
    \end{small}
  \end{center}
\end{table}

\newpage

\end{document}